\crefname{hypothesis}{Hypothesis}{Hypotheses}
\Crefname{ALC@unique}{Line}{Lines}
\colorlet{texcscolor}{blue!50!black}
\colorlet{texemcolor}{red!70!black}
\colorlet{texpreamble}{red!70!black}
\colorlet{codebackground}{black!25!white!25}
\lstdefinestyle{siamlatex}{%
  style=tcblatex,
  texcsstyle=*\color{texcscolor},
  texcsstyle=[2]\color{texemcolor},
  keywordstyle=[2]\color{texemcolor},
  moretexcs={cref,Cref,maketitle,mathcal,text,headers,email,url},
}
\DeclareTotalTCBox{\code}{ v O{} }
{ 
  fontupper=\ttfamily\color{black},
  nobeforeafter,
  tcbox raise base,
  colback=codebackground,colframe=white,
  top=0pt,bottom=0pt,left=0mm,right=0mm,
  leftrule=0pt,rightrule=0pt,toprule=0mm,bottomrule=0mm,
  boxsep=0.5mm,
  #2}{#1}
\patchcmd\newpage{\vfil}{}{}{}
 \newtheorem{thm}{Theorem}[section]
\newtheorem{definition2}[thm]{Definition}
\newtheorem{example2}[thm]{Example}
\newcommand{\ten}[1]{\mathcal{#1}}
\newcommand{\mat}[1]{\mathbf{#1}}
\newcommand{\bmat}[1]{\boldsymbol{#1}}
\newcommand{\ccf}[1]{{\color{black} #1}}
\newcommand{\zz}[1]{\textcolor{black}{#1}}
\title{Active Subspace of Neural Networks: Structural Analysis and Universal Attacks \thanks{Submitted to the editors on October 2019. 
\funding{Chunfeng Cui, Kaiqi Zhang, and Zheng Zhang are supported by the UCSB start-up grant. 
Talgat Daulbaev, Julia Gusak, and Ivan Oseledets are supported by  the Ministry of Education and Science of the Russian Federation (grant 14.756.31.0001).
}}}
\author{Chunfeng Cui\thanks{University of California Santa Barbara, Santa Barbara, CA, USA 
  (\email{chunfengcui@ucsb.edu}, \email{kzhang07@ucsb.edu}, 
  \email{zhengzhang@ece.ucsb.edu}).}
  \and Kaiqi Zhang\footnotemark[2]
  \and Talgat Daulbaev\thanks{Skolkovo Institute of Science and Technology, Moscow, Russia
  (\email{talgat.daulbaev@skoltech.ru}, 
  \email{y.gusak@skoltech.ru}).}
  \and Julia Gusak\footnotemark[3]
  \and \newline
  Ivan Oseledets\thanks{Skolkovo Institute of Science and Technology and Institute of Numerical Mathematics of Russian Academy of Sciences,  Moscow, Russia (\email{ivan.oseledets@gamil.com}.)
  } 
  \and Zheng Zhang\footnotemark[2]
  }
\title{Active Subspace of Neural Networks: Structural Analysis and Universal Attacks \thanks{Submitted to the editors on October 2019.
\funding{Chunfeng Cui, Kaiqi Zhang, and Zheng Zhang are supported by the UCSB start-up grant.
Talgat Daulbaev, Julia Gusak, and Ivan Oseledets are supported by  the Ministry of Education and Science of the Russian Federation (grant 14.756.31.0001).
}}}
\author{Chunfeng Cui\thanks{University of California Santa Barbara, Santa Barbara, CA, USA
  (\email{chunfengcui@ucsb.edu}, \email{kzhang07@ucsb.edu},
  \email{zhengzhang@ece.ucsb.edu}).}
  \and Kaiqi Zhang\footnotemark[2]
  \and Talgat Daulbaev\thanks{Skolkovo Institute of Science and Technology, Moscow, Russia
  (\email{talgat.daulbaev@skoltech.ru},
  \email{y.gusak@skoltech.ru}).}
  \and Julia Gusak\footnotemark[3]
  \and \newline
  Ivan Oseledets\thanks{Skolkovo Institute of Science and Technology and Institute of Numerical Mathematics of Russian Academy of Sciences,  Moscow, Russia (\email{ivan.oseledets@gamil.com}.)
  }
  \and Zheng Zhang\footnotemark[2]
  }
\begin{document}
\maketitle

\begin{tcbverbatimwrite}{tmp_\jobname_abstract.tex}
\begin{abstract}
Active subspace is a model reduction method widely used in the uncertainty quantification community. In this paper, we propose  analyzing the internal structure and vulnerability of deep neural networks using active subspace.  
Firstly, we employ the active subspace to measure the number of ``active neurons'' at each intermediate layer, \ccf{which indicates} that the number of  neurons can be reduced from several thousands to several dozens. This motivates us to change the network structure and to develop a new and more compact network, referred to as {ASNet}, that has significantly fewer model parameters.  Secondly, we propose  analyzing the vulnerability of a neural network using active subspace \ccf{by} finding an additive universal adversarial attack vector that can misclassify a dataset with a high probability.  
Our experiments on CIFAR-10 show that ASNet can  achieve 23.98$\times$ parameter and 7.30$\times$ flops reduction. The universal active subspace  attack vector can achieve around 20\% higher attack ratio compared with the existing approaches in our numerical experiments.  
The PyTorch codes for this paper are available online \footnote{Codes are available at: \url{https://github.com/chunfengc/ASNet}}.


\end{abstract} 

\begin{keywords}
 Active Subspace, Deep Neural Network, Network Reduction, Universal Adversarial Perturbation
\end{keywords}

\begin{AMS}
  90C26, 15A18, 62G35
\end{AMS}
\end{tcbverbatimwrite}
\begin{abstract}
Active subspace is a model reduction method widely used in the uncertainty quantification community. In this paper, we propose  analyzing the internal structure and vulnerability of deep neural networks using active subspace.
Firstly, we employ the active subspace to measure the number of ``active neurons'' at each intermediate layer, \ccf{which indicates} that the number of  neurons can be reduced from several thousands to several dozens. This motivates us to change the network structure and to develop a new and more compact network, referred to as {ASNet}, that has significantly fewer model parameters.  Secondly, we propose  analyzing the vulnerability of a neural network using active subspace \ccf{by} finding an additive universal adversarial attack vector that can misclassify a dataset with a high probability.
Our experiments on CIFAR-10 show that ASNet can  achieve 23.98$\times$ parameter and 7.30$\times$ flops reduction. The universal active subspace  attack vector can achieve around 20\% higher attack ratio compared with the existing approaches in our numerical experiments.
The PyTorch codes for this paper are available online \footnote{Codes are available at: \url{https://github.com/chunfengc/ASNet}}.


\end{abstract}

\begin{keywords}
 Active Subspace, Deep Neural Network, Network Reduction, Universal Adversarial Perturbation
\end{keywords}

\begin{AMS}
  90C26, 15A18, 62G35
\end{AMS}



\section{Introduction}


Deep neural networks have achieved impressive performance in many applications, such as computer vision \cite{krizhevsky2012imagenet}, nature language processing \cite{young2018recent}, and speech recognition \cite{graves2006connectionist}. Most neural networks   use  deep structure (i.e., many layers) and a huge number of neurons   to achieve a high accuracy and expressive  power~\cite{oymak2019towards,ge2019mildly}. 
 However, it is still unclear how many layers and neurons are necessary. Employing an unnecessarily complicated deep neural network can cause huge extra costs in run-time and hardware resources. Driven by resource-constrained applications such as robotics and internet of things, there is an increasing interest in building smaller neural networks by  removing network redundancy.  Representative methods include network pruning and  sharing~\cite{frankle2018lottery, han2015deep, he2018amc, liu2018rethinking, liu2018dynamic}, low-rank matrix and tensor factorization~\cite{sainath2013low, hawkins2019bayesian, garipov2016ultimate, lebedev2014speeding,novikov2015tensorizing}, parameter quantization~\cite{courbariaux2016binarized, deng2018gxnor}, knowledge distillation~\cite{hinton2015distilling, romero2014fitnets}, and so forth.  
However, most existing methods delete model parameters directly without changing the network architecture~\cite{he2018amc, han2015deep, cai2018proxylessnas, liu2018dynamic}.

Another important issue of deep neural networks is the lack of robustness. 
A deep neural network is desired to maintain good performance for noisy or corrupted data   to be deployed in safety-critical applications such as autonomous driving and medical image analysis. 
However, recent studies  have revealed that many state-of-the-art deep neural networks are vulnerable  to small perturbations  \cite{szegedy2013intriguing}.  
 A substantial number of methods have been proposed to generate adversarial examples. Representative works can be classified into four classes  \cite{serban2018adversarial}, including 
 optimization methods \cite{carlini2017towards,moosavi2016deepfool,moosavi2017universal, szegedy2013intriguing}, 
 sensitive features \cite{goodfellow2014explaining,papernot2016limitations}, geometric transformations \cite{dziugaite2016study,kanbak2018geometric}, 
 and generative models \cite{baluja2017adversarial}. 
However, these methods share a fundamental limitation: each perturbation is designed for a given data point, and one has to implement the algorithm again   to generate the perturbation for a new data sample. 
Recently, several methods have also been proposed  to compute a universal adversarial attack   to fool a dataset simultaneously (rather than one data sample) in various applications, such as computer vision \cite{moosavi2017universal}, speech recognition \cite{neekhara2019universal}, audio \cite{abdoli2019universal}, and text classifier \cite{behjati2019universal}. 
However,  all the above methods only solve a series of  data-dependent sub-problems. 
In \cite{khrulkov2018art}, Khrulkov et al. proposed to construct   universal perturbation by computing the so-called $(p,q)$-singular vectors of the Jacobian matrices of hidden layers of a network. 

This paper investigates the above two issues with the active subspace method~\cite{russi2010uncertainty,constantine2015active, constantine2014active} that was originally developed for uncertainty quantification. The key idea of the active subspace is to identify the \ccf{low-dimensional subspace constructed by some} important directions that can contribute significantly to the variance of the multi-variable function. 
These directions are \ccf{corresponding to} the principal components of the uncentered  covariance matrix of gradients. 
Afterwards, a response surface  can be constructed in this low-dimensional subspace to reduce the  number of parameters for  partial  differential equations  \cite{constantine2014active} and uncertainty quantification \cite{constantine2015exploiting}.  However, the power of active subspace in analyzing and attacking deep neural networks has not been explored.


\subsection{Paper Contributions}  
The contribution of this manuscript is twofold.

\begin{itemize}[leftmargin=*]
 \item Firstly, we apply the active subspace to  some intermediate layers of a deep neural network, and try to \ccf{answer} the following question: {\it how many neurons and layers are  important in a deep neural network?} 
Based on the active subspace,  we  propose the definition of  ``active neurons''. 
 Fig.~\ref{fig:motivation}~(a) shows that even though  there are tens of thousands of neurons, only dozens of them are important from  the active subspace point of view.  
Fig.~\ref{fig:motivation}~(b) further shows that most of the neural network parameters  are distributed in the last few layers. This motivates us to cut off the tail  layers  and replace  them with a smaller and simpler new framework called  ASNet. ASNet  contains three parts: the first few layers of a deep neural network, an active-subspace  layer that maps the intermediate  neurons to a  low-dimensional  subspace, and a  polynomial chaos expansion  layer that projects the reduced variables to the outputs. 
Our numerical experiments show that the proposed ASNet has  much  fewer model parameters  than the original one.  
ASNet can also be combined with existing structured re-training methods (e.g., pruning and quantization) to get   better accuracy \ccf{while using} fewer model parameters.

\item Secondly,  we \ccf{use} active subspace to develop a new universal attack method to fool deep neural networks on a whole data set. 
 We formulate this problem as a ball-constrained loss maximization problem and propose a heuristic projected  gradient descent algorithm to solve it. At each iteration, the ascent direction is the dominant  active subspace, and the stepsize is  decided by the backtracking algorithm.  
 Fig.~\ref{fig:motivation}~(c) shows that  \ccf{the attack ratio of the active subspace direction is much higher than that of  the random vector.}


\end{itemize}

\begin{figure}[t]
\centering
\includegraphics[width=
\textwidth]{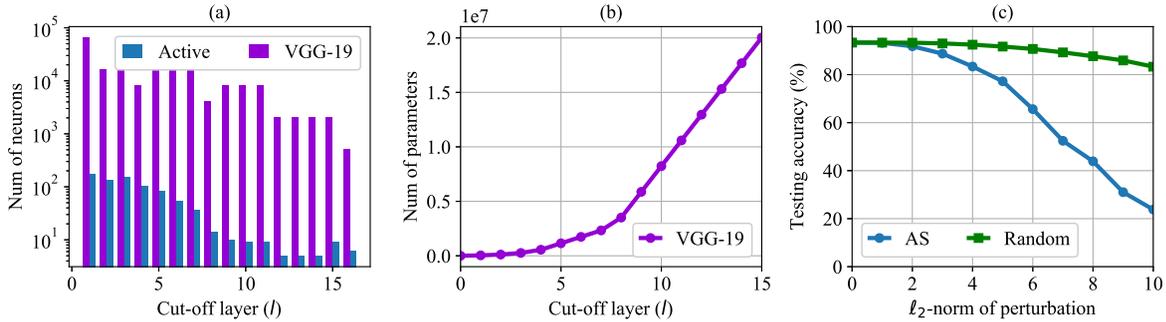}
\caption{Structural analysis of deep neural networks by the active subspace (AS). All  experiments are conducted on  CIFAR-10 by VGG-19. (a)  The number of neurons can be significantly reduced by the active subspace. 
\ccf{Here, the number of active neurons is defined by Definition~\ref{def:as} with a threshold $\epsilon=0.05$}; (b)  Most of the parameters are distributed in the last few layers; (c) The active subspace direction can perturb the network   significantly.} 
\label{fig:motivation}
\end{figure}

The rest of this manuscript is 
organized as follows. In Section~\ref{sec:AS}, we review the key idea of active subspace.  Based on the active-subspace method, Section~\ref{sec:ASNet}  shows how to find the number of active neurons in a deep neural network and further proposes a new and compact network, referred to as ASNet. Section~\ref{sec:adv} develops a new universal adversarial attack method based on active subspace. The numerical experiments for both ASNet and universal adversarial attacks are presented in Section~\ref{sec:Numerical}. Finally, we conclude this paper in Section~\ref{sec:conclusion}.

\section{Active Subspace}
\label{sec:AS} 
Active-subspace   is  an efficient tool for functional analysis and  dimension reduction. 
Its key idea is to construct a low-dimensional subspace for the input variables in which the function value changes dramatically.   
 Given a continuous function $\ccf{c}(\mat x$) with $\mat{x}$ described by the probability density function $\rho(\mat x)$, one can construct an uncentered covariance matrix for the gradient: $\mat C=\mathbb{E}[\nabla \ccf{c}(\mat x)\nabla \ccf{c}(\mat x)^T]$. 
Suppose the  matrix  $\mat C$ admits  the following eigenvalue decomposition,  
\begin{equation}
\label{eq:C_svd}
\mat C=\mat V\boldsymbol{\Lambda} \mat V^T,
\end{equation} 
where  $\mat V$ includes all orthogonal eigenvectors and  
\begin{equation}\label{equ:lmd} 
\boldsymbol{\Lambda}= {\rm diag} (\lambda_1, \cdots, \lambda_n), \ \lambda_1\geq  \cdots \geq \lambda_n \geq 0
\end{equation}
are the eigenvalues. 
All the eigenvalues are nonnegative because   $\mat C$ is \ccf{positive} semidefinite. One can split the matrix  $\mat V$ into two parts, 
 \begin{equation}\label{equ:V_split}
     \mat{V}=[\mat{V}_1,\ \mat{V}_2], \text{ where } \mat{V}_1\in\mathbb{R}^{n\times r} \text{ and } \mat{V}_2\in\mathbb{R}^{n\times (n-r)}.
 \end{equation} 
The subspace spanned by matrix  $\mat{V}_1\in\mathbb{R}^{n\times r}$ is called an active subspace~\cite{russi2010uncertainty}, because $\ccf{c}(\mat x)$ is  sensitive to perturbation vectors inside this subspace .

  \begin{remark}[Relationships with the Principal Component Analysis]
Given a set of data $\mat X=[\mat x^1,\ldots,\mat x^m]$ with each column representing a data sample and each row is zero-mean, the first principal component $\mat w_1$   inherits the maximal  variance from $\mat X$, namely, 
 \begin{equation}\label{equ:PCA_1}
     \mat w_1=\underset{\|\mat w\|_2=1}{\text{argmax}}\ \sum_{i=1}^m \mat (\mat w_1^T\mat x^i)^2 = \underset{\|\mat w\|_2=1}{\text{argmax}}\ \mat w^T\mat X\mat X^T\mat w.
 \end{equation}
 The variance is maximized when $\mat w_1$ is the eigenvector associated with the largest eigenvalue of $\mat X\mat X^T$. 
  The first $r$ principal components  are the $r$ eigenvectors associated with the $r$ largest eigenvalues of $\mat X\mat X^T$. 
  The main difference with the active subspace is that the principal component analysis uses the covariance matrix of input data sets $\mat X$, but the active-subspace method uses the covariance matrix of gradient $\nabla \ccf{c}(\mat x)$. 
  Hence, a perturbation along the direction $\mat{w}_1$ from \ccf{(\ref{equ:PCA_1}) only guarantee the variability in the data, and}  does not necessarily cause a significantly change on the value of $\ccf{c}(\mat{x})$. 
  
 
 \end{remark}

The following lemma quantitatively describes that $\ccf{c}(\mat x)$ varies more on average along the directions defined by the columns of $\mat V_1$ than   the directions defined by the columns of $\mat V_2$.


 \begin{lemma}\cite{constantine2014active}\label{lem:as_gradient}
 Suppose $\ccf{c}(\mat x)$ is a continuous function and $\mat C$ is obtained from (\ref{eq:C_svd}). For the matrices $\mat V_1$ and $\mat V_2$  generated by (\ref{equ:V_split}), and  the reduced vector 
 \begin{equation}
     \mat z=\mat V_1^T\mat x  \text{ and }  \tilde{\mat z}=\mat V_2^T\mat x,
 \end{equation} 
 it holds that 
\begin{align}
  \nonumber  \mathbb{E}_{\mat x}[\nabla_{\mat z} \ccf{c}(\mat x)^T\nabla_{\mat z} \ccf{c}(\mat x)] =& \lambda_1+\ldots+\lambda_r,\\ 
  \mathbb{E}_{\mat x}[\nabla_{\tilde{\mat z}} \ccf{c}(\mat x)^T\nabla_{\tilde{\mat z}} \ccf{c}(\mat x)] =& \lambda_{r+1}+\ldots+\lambda_n.
\end{align}
 \end{lemma}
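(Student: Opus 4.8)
The plan is to reduce the statement to the chain rule together with the linearity and cyclic invariance of the trace. First I would use the orthogonality of $\mat V$ to write $\mat x = \mat V\mat V^T\mat x = \mat V_1\mat z + \mat V_2\tilde{\mat z}$, so that $\ccf{c}(\mat x)$ can be viewed as a function of the pair $(\mat z,\tilde{\mat z})$ through $\ccf{c}(\mat V_1\mat z + \mat V_2\tilde{\mat z})$. Differentiating with respect to $\mat z$ while holding $\tilde{\mat z}$ fixed, and noting that the Jacobian of the map $\mat z\mapsto \mat V_1\mat z + \mat V_2\tilde{\mat z}$ is $\mat V_1$, the chain rule gives $\nabla_{\mat z}\ccf{c}(\mat x) = \mat V_1^T\nabla_{\mat x}\ccf{c}(\mat x)$, and symmetrically $\nabla_{\tilde{\mat z}}\ccf{c}(\mat x) = \mat V_2^T\nabla_{\mat x}\ccf{c}(\mat x)$.

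Next I would write the scalar $\nabla_{\mat z}\ccf{c}^T\nabla_{\mat z}\ccf{c} = \nabla_{\mat x}\ccf{c}^T\mat V_1\mat V_1^T\nabla_{\mat x}\ccf{c} = \mathrm{trace}\big(\mat V_1^T\,\nabla_{\mat x}\ccf{c}\,\nabla_{\mat x}\ccf{c}^T\,\mat V_1\big)$, take $\mathbb{E}_{\mat x}$, and exchange expectation with the (linear) trace:
\[
\mathbb{E}_{\mat x}\big[\nabla_{\mat z}\ccf{c}^T\nabla_{\mat z}\ccf{c}\big] = \mathrm{trace}\big(\mat V_1^T\,\mathbb{E}_{\mat x}[\nabla_{\mat x}\ccf{c}\,\nabla_{\mat x}\ccf{c}^T]\,\mat V_1\big) = \mathrm{trace}(\mat V_1^T\mat C\mat V_1).
\]
Substituting $\mat C = \mat V\boldsymbol{\Lambda}\mat V^T$ and using that $\mat V^T\mat V_1$ is the $n\times r$ matrix consisting of the first $r$ columns of the identity (by orthonormality of the columns of $\mat V$), we get $\mat V_1^T\mat C\mat V_1 = {\rm diag}(\lambda_1,\ldots,\lambda_r)$, whose trace is $\lambda_1+\cdots+\lambda_r$. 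The second identity is obtained in exactly the same way, with $\mat V^T\mat V_2$ now picking out the last $n-r$ columns of the identity, so that $\mat V_2^T\mat C\mat V_2 = {\rm diag}(\lambda_{r+1},\ldots,\lambda_n)$.

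The argument has no substantial obstacle; the only care needed is in the bookkeeping. One point is that the hypothesis should really be that $\ccf{c}$ is continuously differentiable (not merely continuous), which is what makes $\nabla\ccf{c}$, $\mat C$, and the chain rule meaningful; I would state this mild strengthening, in line with \cite{constantine2014active}. A second point is fixing the convention that $\nabla_{\mat z}\ccf{c}$ denotes the gradient of the partial map $\mat z\mapsto \ccf{c}(\mat V_1\mat z + \mat V_2\tilde{\mat z})$; once this is made explicit the chain-rule identity above is immediate. Finally, interchanging expectation and trace is justified as soon as $\mathbb{E}\|\nabla\ccf{c}(\mat x)\|_2^2 < \infty$, which is precisely the condition ensuring that $\mat C$ (hence the right-hand sides) is finite, so this is the natural standing assumption rather than a real difficulty.
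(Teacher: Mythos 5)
Your proposal is correct and follows essentially the same route as the paper's sketch: identify $\nabla_{\mat z}\ccf{c}=\mat V_1^T\nabla_{\mat x}\ccf{c}$ via the chain rule, convert the squared norm to a trace, exchange trace with expectation to obtain $\mathrm{trace}(\mat V_1^T\mat C\mat V_1)$, and read off $\lambda_1+\cdots+\lambda_r$ from the eigendecomposition (and analogously for $\mat V_2$). Your added remarks on differentiability and integrability are reasonable clarifications of hypotheses the paper leaves implicit, but the argument itself is the same.
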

\textsl{Sketch of proof \cite{constantine2014active}: }
\begin{align*}
     &\mathbb{E}_{\mat x}[\nabla_{\mat z} \ccf{c}(\mat x)^T\nabla_{\mat z} \ccf{c}(\mat x)]\\
    =&\text{trace}\left(\mathbb{E}_{\mat x}[\nabla_{\mat z} \ccf{c}(\mat x)\nabla_{\mat z} \ccf{c}(\mat x)^T] \right)\\
    =&\text{trace}\left( \mathbb{E}_{\mat x}[\mat V_1^T\nabla_{\mat x} \ccf{c}(\mat x)\nabla_{\mat x} \ccf{c}(\mat x)^T\mat V_1]\right)\\
    =&\text{trace}\left(\mat V_1^T\mat C\mat V_1\right)\\
    =&\lambda_1+\ldots+\lambda_r.
\end{align*}

When $\lambda_{r+1}=\ldots=\lambda_n=0$, Lemma \ref{lem:as_gradient} implies $\nabla_{\tilde{\mat z}} \ccf{c}(\mat x)$ is zero everywhere, i.e., $\ccf{c}(\mat x)$ is \ccf{$\tilde z$}-invariant. 
In this case, we may reduce   $\mat x\in\mathbb R^n$ to a low-dimensional vector $\mat z=\mat V_1^T\mat x\in\mathbb R^r$ and  construct  a new   response surface  $g(\mat z)$  to represent $\ccf{c}(\mat {x})$.   
Otherwise, if $\lambda_{r+1}$ is small, we may still construct a response surface $g(\mat z)$  to approximate $\ccf{c}(\mat x)$ with a bounded error, as shown in the following lemma.



\subsection{Response Surface} 
For a fixed $\mat z$, the best guess for \ccf{$g$} is the conditional expectation of $\ccf{c}$ given $\mat z$, i.e., 
\begin{equation}\label{equ:theory_g}
   g(\mat{z})=\mathbb{E}_{\tilde{\mat z}}[\ccf{c}(\mat {x})|\mat{z}]=\int \ccf{c}(\mat  V_1\mat z+\mat V_2\tilde{\mat z})\rho(\tilde{\mat z}|\mat z)d\tilde{\mat z}.
\end{equation} 
Based on the Poincar\'e inequality, the following approximation error bound is obtained~\cite{constantine2014active}. 

\begin{lemma}
\label{lem:astheory}
Assume that $\ccf{c}(\mat  x)$ is absolutely continuous and square integrable with respect to the probability density function $\rho(\mat x)$, then the  approximation function $g(\mat z)$ in (\ref{equ:theory_g}) satisfies: 
\begin{equation}\label{equ:approximationerr}
    \mathbb{E}[\left(\ccf{c}(\mat {x})-g(\mat{z})\right)^2]\le \ccf{O}(\lambda_{r+1}+\ldots+\lambda_n).  
\end{equation}
\end{lemma}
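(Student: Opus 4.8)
The plan is to exploit the fact that $g(\mat z)$ in (\ref{equ:theory_g}) is, by construction, the conditional expectation $\mathbb E_{\tilde{\mat z}}[\ccf{c}(\mat x)\mid \mat z]$, hence the best mean-square approximation of $\ccf{c}$ among all functions of $\mat z$ alone, and to control the residual by a Poincar\'e inequality in the complementary variables $\tilde{\mat z}$. First I would write the mean-square error as an iterated integral, using the orthogonal change of variables $\mat x \mapsto (\mat z, \tilde{\mat z}) = (\mat V_1^T \mat x, \mat V_2^T \mat x)$ together with Fubini/Tonelli:
\[
\mathbb E[(\ccf{c}(\mat x)-g(\mat z))^2] = \int \left( \int \big(\ccf{c}(\mat V_1 \mat z + \mat V_2 \tilde{\mat z}) - g(\mat z)\big)^2 \rho(\tilde{\mat z}\mid\mat z)\, d\tilde{\mat z} \right) \rho(\mat z)\, d\mat z.
\]

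Next, for each fixed $\mat z$, the inner integrand $\ccf{c}(\mat V_1\mat z+\mat V_2\tilde{\mat z}) - g(\mat z)$ has zero mean with respect to $\rho(\tilde{\mat z}\mid \mat z)$, precisely because $g(\mat z)$ is the conditional mean. This is exactly the hypothesis needed to apply a Poincar\'e inequality: assuming the conditional density $\rho(\tilde{\mat z}\mid \mat z)$ admits a Poincar\'e constant $C$ bounded uniformly in $\mat z$ (which holds, e.g., when $\rho$ is Gaussian or uniform on a bounded convex set — the regimes considered in~\cite{constantine2014active}), we get
\[
\int \big(\ccf{c} - g(\mat z)\big)^2 \rho(\tilde{\mat z}\mid\mat z)\, d\tilde{\mat z} \le C \int \|\nabla_{\tilde{\mat z}} \ccf{c}(\mat V_1\mat z+\mat V_2\tilde{\mat z})\|_2^2\, \rho(\tilde{\mat z}\mid\mat z)\, d\tilde{\mat z}.
\]
Because $[\mat V_1,\mat V_2]$ is orthogonal, the chain rule gives $\nabla_{\tilde{\mat z}} \ccf{c} = \mat V_2^T \nabla_{\mat x} \ccf{c}$, so the right-hand side only involves the gradient of $\ccf{c}$ projected onto the inactive directions.

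Finally, I would integrate over $\mat z$ and identify the resulting term through Lemma~\ref{lem:as_gradient}:
\[
\mathbb E[(\ccf{c}(\mat x)-g(\mat z))^2] \le C\, \mathbb E_{\mat x}[\nabla_{\tilde{\mat z}} \ccf{c}(\mat x)^T \nabla_{\tilde{\mat z}} \ccf{c}(\mat x)] = C\,(\lambda_{r+1}+\cdots+\lambda_n),
\]
which is the claimed $\ccf{O}(\lambda_{r+1}+\cdots+\lambda_n)$ bound, with the hidden constant being the (uniform) Poincar\'e constant. I expect the Poincar\'e step to be the main obstacle: one must ensure the conditional measure genuinely supports a Poincar\'e inequality with a constant independent of the conditioning value $\mat z$ — for general $\rho$ this may fail, so the cleanest route is to state the lemma under the standard assumptions on $\rho$ used in the active-subspace literature and cite the corresponding constant (e.g., $C=1$ for the standard Gaussian, by the Gaussian Poincar\'e inequality). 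The remaining steps — the change of variables, the application of Fubini, and the identification of the gradient term — are routine given the square-integrability and absolute-continuity hypotheses.
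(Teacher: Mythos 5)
Your proposal is correct and follows essentially the same route as the paper's own sketch: condition on $\mat z$ via the tower property, apply the Poincar\'e inequality to the zero-conditional-mean residual $\ccf{c}-g(\mat z)$ in the $\tilde{\mat z}$ variables, and then invoke Lemma~\ref{lem:as_gradient} to identify the bound as $\text{const}\times(\lambda_{r+1}+\cdots+\lambda_n)$. Your additional remarks on the need for a Poincar\'e constant uniform in $\mat z$ (e.g., for Gaussian or uniform-on-convex densities) are a welcome precision that the paper's sketch leaves implicit.
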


\textsl{Sketch of proof \cite{constantine2014active}: }
\begin{align*}
     &\mathbb{E}_{\mat x}[\left(\ccf{c}(\mat {x})-g(\mat{z})\right)^2]\\
     =& \mathbb{E}_{\mat z}[\mathbb{E}_{\tilde{\mat z}}[\left(\ccf{c}(\mat x)-g(\mat z)\right)^2 |\mat{z}]]\\
     \le&\text{const}\times \mathbb{E}_{\mat z}[\mathbb{E}_{\tilde{\mat z}}[\nabla_{\tilde{\mat z}} \ccf{c}(\mat x)^T\nabla_{\tilde{\mat z}} \ccf{c}(\mat x) |\mat{z}]] \text{\quad (Poincar\'e inequality)}\\
     =& \text{const}\times\mathbb{E}_{\mat x}[\nabla_{\tilde{\mat z}} \ccf{c}(\mat x)^T\nabla_{\tilde{\mat z}} \ccf{c}(\mat x)]\\
     = &\text{const}\times (\lambda_{r+1}+\ldots+\lambda_n)\text{\quad (Lemma \ref{lem:as_gradient})}\\
     =& \ccf{O(\lambda_{r+1}+\ldots+\lambda_n).}
\end{align*}

In other words, the active-subspace approximation error will be small if $\lambda_{r+1},\ldots,\lambda_n$ are negligible. 
 

\section{Active Subspace for Structural Analysis and Compression of  Deep Neural Networks}
\label{sec:ASNet} 

This section  applies the  active subspace to analyze the internal layers of a deep neural network to  reveal the number of important neurons at each layer. Afterward, a new network called ASNet is built to reduce the storage and computational complexity. 

\subsection{Deep Neural Networks}

A deep neural network can be described as 
\begin{equation}
    f(\mat x_0)=f_{L}\left (f_{L-1}\ldots \left( f_1(\mat x_0)\right) \right),
\end{equation}
where $\mat x_0\in\mathbb{R}^{n_0}$ is an input, $L$ is the total number of layers, and  $f_{l}: \mathbb{R}^{n_{l-1}}\rightarrow  \mathbb{R}^{n_{l}}$ is a function representing the $l$-th layer (e.g., combinations of convolution \ccf{or} fully connected, batch normalization, ReLU, or pooling layers).  
For any  $1\le l\le L$, we rewrite the above feed-forward model  as a superposition of functions, i.e., 
\begin{equation}
   f(\mat x_0) = f^{\ccf l}_{\text{post}}(f^{\ccf l}_{\text{pre}}(\mat x_0)),
\end{equation} 
where the {\bf pre-model} $f^{\ccf l}_{\text{pre}}(\cdot)=f_{l}\ldots (f_{1}(\cdot))$  denotes all operations before the $l$-th layer and the {\bf post-model} $f^{\ccf l}_{\text{post}}(\cdot)=f_{L}\ldots (f_{l+1}(\cdot))$  denotes  all succeeding operations.  
The intermediate neuron $\mat x_{\ccf l} = f^{\ccf l}_{\text{pre}}(\mat x_0)  \in\mathbb{R}^{n_{l}}$   usually lies in a high dimension. We aim to study whether such a high dimensionality is necessary. If not, how can we reduce it?

\subsection{The Number of Active Neurons}

Denote $\text{loss}(\cdot)$ as the loss function, and 
\begin{equation}
    c_{\ccf l}(\mat  x)=\text{loss}(f^{\ccf l}_{\text{post}}(\mat x)).
\end{equation}  
The covariance matrix $\mat C=\mathbb{E}[\nabla c_{\ccf l}(\mat  x) \nabla c_{\ccf l}(\mat  x)^T]$   admits    the eigenvalue decomposition $\mat C=\mat V\boldsymbol{\Lambda} \mat V^T$ with $\boldsymbol{\Lambda}= {\rm diag} (\lambda_1, \cdots, \lambda_{n_l})$.   
We try to extract the active subspace of $c_{\ccf l}(\mat x)$ and reduce the intermediate vector $\mat x$ to a \ccf{low} dimension. 
\ccf{Here the intermediate neuron $\mat x$, the covariance matrix $\mat C$, eigenvalues $\boldsymbol \Lambda$, and eigenvectors $\mat V$ are also related to the layer index $l$,  but we ignore the index for simplicity.}

\begin{definition2} 
\label{def:as}
Suppose $\boldsymbol \Lambda$ is computed by  (\ref{equ:lmd}).  For any layer index $1\le l\le L$, we define \textbf{the number of active neurons} $n_{l,\text{AS}}$ as follows: 
\begin{equation}
\label{equ:nAS}
    n_{l,AS} = \arg\min\left\{i:\   \frac{\lambda_1+\ldots+\lambda_i}{\lambda_1+\ldots+\lambda_{n_l}}\ge 1-\epsilon\right\},
\end{equation}
where $\epsilon>0$ is a user-defined threshold. 
\end{definition2}

Based on Definition \ref{def:as}, the  post-model can be approximated by an  $n_{l,\text{AS}}$-dimensional  function  with a high accuracy, i.e.,  
\begin{equation}\label{equ:app}
      g_{\ccf l}(\mat{z})=\mathbb{E}_{\tilde{\mat z}}[c_{\ccf l}(\mat {x})|\mat{z}].
\end{equation}
Here  $\mat z=\mat V_1^T\mat x \in\mathbb{R}^{n_{l,AS}}$ plays the role of active neurons, $\tilde{\mat z}=\mat V_2^T\mat x \in\mathbb{R}^{n-n_{l,AS}}$,  and $\mat V=[\mat V_1, \mat V_2]$.

\begin{lemma}
\label{lem: ASerror}
Suppose  the input $\mat x_0$ is  bounded.  Consider a deep neural network  with the following operations: convolution, \ccf{fully connected}, 
  ReLU, batch normalization,   max-pooling, and equipped with the cross entropy   loss function. Then for any   $l\in\{1,\ldots,L\}$, $\mat x=f^{\ccf l}_{\text{pre}}(\mat x_0)$, and $c_{\ccf l}(\mat x)=\text{loss}(f^{\ccf l}_{\text{post}}(\mat x))$,  the \ccf{$n_{l,AS}$-dimensional} function $g_{\ccf l}(\mat z)$ defined in (\ref{equ:app}) satisfies 
\begin{equation}\label{equ:approximationerr2}
 \mathbb{E}_{\ccf{\mat z}}\left[\left(g_{\ccf l}(\mat{z})\right)^2\right] \le 2\mathbb{E}_{\ccf{\mat x_0}}\left[\left(c_{\ccf 0}(\mat {x_0})\right)^2\right] + O(\epsilon). 
\end{equation}
\end{lemma}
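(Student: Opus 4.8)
The plan is to combine the response-surface error bound of Lemma~\ref{lem:astheory}, the spectral ratio in Definition~\ref{def:as}, and the elementary identity $c_{\ccf l}\circ f^{\ccf l}_{\text{pre}}=c_{\ccf 0}$. Since $g_{\ccf l}$ depends on $\mat x$ only through $\mat z=\mat V_1^T\mat x$, we have $\mathbb{E}_{\mat z}[(g_{\ccf l}(\mat z))^2]=\mathbb{E}_{\mat x}[(g_{\ccf l}(\mat V_1^T\mat x))^2]$, so applying $(a+b)^2\le 2a^2+2b^2$ with $a=g_{\ccf l}(\mat z)-c_{\ccf l}(\mat x)$ and $b=c_{\ccf l}(\mat x)$ gives
\[
  \mathbb{E}_{\mat z}[(g_{\ccf l}(\mat z))^2]\le 2\,\mathbb{E}_{\mat x}\!\left[(c_{\ccf l}(\mat x)-g_{\ccf l}(\mat z))^2\right]+2\,\mathbb{E}_{\mat x}\!\left[(c_{\ccf l}(\mat x))^2\right].
\]
For the second term, since $f^{\ccf l}_{\text{post}}\circ f^{\ccf l}_{\text{pre}}=f$ and the law of $\mat x$ is the pushforward of the law of $\mat x_0$ under $f^{\ccf l}_{\text{pre}}$, we get $c_{\ccf l}(\mat x)=\text{loss}(f^{\ccf l}_{\text{post}}(f^{\ccf l}_{\text{pre}}(\mat x_0)))=\text{loss}(f(\mat x_0))=c_{\ccf 0}(\mat x_0)$, hence $\mathbb{E}_{\mat x}[(c_{\ccf l}(\mat x))^2]=\mathbb{E}_{\mat x_0}[(c_{\ccf 0}(\mat x_0))^2]$, which is exactly the leading term on the right-hand side of \eqref{equ:approximationerr2}.

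For the first term, Lemma~\ref{lem:astheory} applied to $c_{\ccf l}$ yields $\mathbb{E}_{\mat x}[(c_{\ccf l}(\mat x)-g_{\ccf l}(\mat z))^2]\le\text{const}\cdot(\lambda_{n_{l,AS}+1}+\cdots+\lambda_{n_l})$, and Definition~\ref{def:as} gives $\lambda_{n_{l,AS}+1}+\cdots+\lambda_{n_l}\le\epsilon(\lambda_1+\cdots+\lambda_{n_l})=\epsilon\,\text{trace}(\mat C)=\epsilon\,\mathbb{E}_{\mat x}[\|\nabla c_{\ccf l}(\mat x)\|_2^2]$. Thus the claim reduces to showing $\mathbb{E}_{\mat x}[\|\nabla c_{\ccf l}(\mat x)\|_2^2]$ is a finite constant (depending only on the network and the bound on $\mat x_0$), so that $2\,\mathbb{E}_{\mat x}[(c_{\ccf l}(\mat x)-g_{\ccf l}(\mat z))^2]=O(\epsilon)$.

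This last step is the main obstacle and is where the hypotheses on the layer types are used. Because $\mat x_0$ is bounded, its closure is compact and $f^{\ccf l}_{\text{pre}}$ is continuous, so $\mat x$ ranges over a bounded set $K_l$; I would then bound $\sup_{\mat x\in K_l}\|\nabla c_{\ccf l}(\mat x)\|_2$ by the chain rule, writing $\nabla c_{\ccf l}(\mat x)$ as a product of the Jacobians of the loss, of softmax, and of $f_{l+1},\ldots,f_L$ evaluated along the (bounded) forward trajectory starting at $\mat x$. Each factor is uniformly bounded on the relevant bounded domain: convolution, fully connected, and inference-time batch normalization are affine with constant operator norm; ReLU and max-pooling have Jacobians whose entries lie in $\{0,1\}$, hence operator norm at most $1$; and the gradient of cross-entropy-with-softmax with respect to the logits equals $p-y$ with $p$ a probability vector, hence is uniformly bounded. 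A product of finitely many uniformly bounded factors is uniformly bounded, say by $M_l$, so $\mathbb{E}_{\mat x}[\|\nabla c_{\ccf l}(\mat x)\|_2^2]\le M_l^2<\infty$. The same boundedness also supplies the hypotheses needed to invoke Lemma~\ref{lem:astheory} (Lipschitz on a bounded set implies absolute continuity, and boundedness implies square integrability against the density $\rho$; the non-differentiability points of ReLU and max-pooling form a null set and are harmless). Absorbing the finitely many $l$-dependent constants into $O(\cdot)$, the two terms combine to give \eqref{equ:approximationerr2}. I expect the only genuine work to be the bookkeeping in this third step — verifying the bounded-Jacobian property for each admissible operation, with care that batch normalization is affine only at inference and that the non-smooth sets are null — while the rest is the two-line decomposition above.
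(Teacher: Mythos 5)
Your proposal is correct and follows essentially the same route as the paper's proof: the $(a+b)^2\le 2a^2+2b^2$ decomposition with $c_{\ccf l}(\mat x)=c_{\ccf 0}(\mat x_0)$, Lemma~\ref{lem:astheory} combined with Definition~\ref{def:as} to bound the approximation term by $\epsilon\,\mathrm{trace}(\mat C)$, and Lipschitz bounds on each layer type (affine layers, non-expansive ReLU/max-pooling, bounded cross-entropy gradient) to control $\mathrm{trace}(\mat C)=\|\mat C^{1/2}\|_F^2$ and to verify absolute continuity and square integrability. Your Jacobian bookkeeping is just a slightly more explicit version of the paper's "non-expansive + $\sqrt{n_L}$-Lipschitz loss" argument.
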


\begin{proof}
   Denote $c_{\ccf l}(\mat x) = \text{loss}(f_L(\ldots (f_{l+1}(\mat x)))$,  where $\text{loss}(\mat y)=-\log \frac{\exp(y_{b})}{\sum_{i=1}^{\ccf{n_{L}}} \exp(y_i)}$ is the  cross entropy loss function, $b$ is the true label, and $\ccf{n_L}$ is the total number of  classes.  
We first  show $c_{\ccf l}(\mat x)$ is  absolutely continuous and square integrable, and then apply  Lemma~\ref{lem:astheory} to derive (\ref{equ:approximationerr2}).

Firstly,   all components of $c_{\ccf l}(\mat x)$ are Lipschitz continuous   because  (1) the convolution, \ccf{fully connected}, and batch normalization operations are all linear; (2) the max pooling and ReLU functions are non-expansive. Here, a mapping $m$ is non-expansive if $\|m(\mat x)-m(\mat y)\|\le\|\mat x-\mat y\|$; (3) the cross entropy loss function is smooth with an  upper bounded gradient, i.e.,  $\|\nabla \text{loss}(\mat  y)\|=\|\mat e_b-\exp(\mat y)/\sum_{i=1}^{\ccf n_L} \exp(y_i)\|\le \sqrt{n_{L}}$.  
The composition of two Lipschitz \zz{continuous} functions is also be Lipschitz continuous: suppose the Lipschitz constants for $f_1$ and $f_2$ are  $\alpha_1$ and $\alpha_2$, respectively, it holds that $\|f_1(f_2(\bar{\mat x}))-f_1(f_2(\underline{\mat x}))\|\le \alpha_1\|f_2(\bar{\mat x})-f_2(\underline{\mat x})\|\le \alpha_1\alpha_2\|\bar{\mat x}-\underline{\mat x}\|$ for any vectors $\bar{\mat x}$ and $ \underline{\mat x}$.  
By recursively applying the above rule,  $c_{\ccf l}(\mat x)$ is Lipschitz  continuous:  
\begin{align*}
 \|c_{\ccf l}(\bar{\mat x})-c_{\ccf l}( \underline{\mat x})\|_2&=\|\text{loss}(f_L(\ldots (f_{l+1}(\bar{\mat x})))) - \text{loss}(f_L(\ldots (f_{l+1}(\underline{\mat x})))) \|_2\\
 &\le  \ccf{\sqrt{n_L}}\alpha_{L}\ldots \alpha_{l+1}\|\bar{\mat x}-\underline{\mat x}\|_2. 
\end{align*} 
The intermediate neuron $\mat x$ is in a bounded domain because the input $\mat x_0$ is bounded and all functions $f_i(\cdot)$ are either continuous or non-expansive.  
Based on the fact that any  Lipschitz-continuous function is also  absolutely continuous on a compact domain \cite{realanalysis2010},   we conclude that $c_{\ccf l}(\mat x)$ is   absolutely continuous. 
 
 Secondly,  because $\mat x$  is bounded and $c_{\ccf l}(\mat x)$ is continuous,  both $c_{\ccf l}(\mat x)$ and its  square integral will be bounded, i.e., 
$\int (c_{\ccf l}(\mat x)^2\rho(\mat x)d\mat x <\infty$. 

Finally, by Lemma~\ref{lem:astheory}, it holds that 
\begin{equation*} 
    \mathbb{E}_{\ccf{\mat x}}[(c_{\ccf l}(\mat {x})-g_{\ccf l}(\mat{z}))^2]\le \ccf{O}(\lambda_{n_{l,AS}+1}+\ldots+\lambda_n). 
\end{equation*}
\ccf{From Definition~\ref{def:as}, we have
\begin{equation*}
\lambda_{n_{l,AS}+1}+\ldots+\lambda_n \le (\lambda_1+\ldots+\lambda_n)\epsilon = \|\mat C^{1/2}\|_F^2\epsilon=O(\epsilon).
\end{equation*}
In the last equality, we used that $\|\mat C^{1/2}\|_F$ is upper bounded because  $c_l(\mat x)$ is Lipschitz continuous with a bounded gradient.}
\ccf{
Consequently, we have 
\begin{align*}
 & \mathbb{E}_{\ccf{\mat x}}[(g_{\ccf l}(\mat{z}))^2]  \\
 =& \mathbb{E}_{\ccf{\mat x}}[(g_{\ccf l}(\mat{z})-c_{\ccf l}(\mat {x}) + c_{\ccf l}(\mat {x}))^2] \\
 \le &  2\mathbb{E}_{\ccf{\mat x}}[(c_{\ccf l}(\mat {x}))^2] + 2 \mathbb{E}_{\ccf{\mat x}}[(c_{\ccf l}(\mat {x})-g_{\ccf l}(\mat{z}))^2]\\
 = & 2\mathbb{E}_{\ccf{\mat x_0}}[(c_{\ccf 0}(\mat {x_0}))^2] + 2 \mathbb{E}_{\ccf{\mat x}}[(c_{\ccf l}(\mat {x})-g_{\ccf l}(\mat{z}))^2]\\
 \le &2\mathbb{E}_{\ccf{\mat x_0}}[(c_{\ccf 0}(\mat {x_0}))^2]  +  O(\epsilon).
\end{align*}
}
The proof is completed.  
\end{proof}

The above lemma shows that  the active subspace method can   reduce the number of neurons of the $l$-th layer from $n_{l}$ to $n_{l, AS}$. 
 \ccf{The loss for the low-dimensional function $g_l(\mat z)$ is bounded by two terms: the loss  $c_0(\mat x_0)$ of the original network, and the threshold  $\epsilon$ related to $n_{l,AS}$. 
 This loss function is the cross entropy loss, not the classification error. 
 However, it is believed that a small loss will result in a small classification error. 
 Further, the result in Lemma \ref{lem: ASerror} is valid for thr fixed parameters in the pre-model. 
 In practice, we can fine-tune the pre-model to achieve   better accuracy. 
 }

\ccf{Further, a small  active neurons $n_{l,AS}$ is critical to get a high compress ratio. 
From  Definition~\ref{def:as},  $n_{l,AS}$ depends on the eigenvalue distribution of the covariance matrix $\mat C$. 
For a proper network structure and a good choice of the layer index $l$, if the eigenvalues of $\mat C$ are dominated by the first few eigenvalues, 
then $n_{l,AS}$ will be small. For instance, in  Fig.~\ref{fig:lmd_PCA_LR}(a), the   eigenvalues for layers $4\le l\le7$ of VGG-19 are nearly exponential decreasing to zero. 
}

 
\subsection{Active Subspace Network (ASNet)}
 
 \begin{algorithm}[t]
\caption{The training procedure of the active subspace network ({ASNet})}
\textbf{Input: } {A pretrained deep neural network, the layer index  $l$, and the number of active neurons $r$.}

\begin{enumerate}
    \item[Step 1] \textbf{Initialize the active subspace layer.} 
    The active subspace layer is a linear projection where the  projection matrix $\mat V_1\in\mathbb{R}^{n\times r}$  is computed  by Algorithm~\ref{alg:onlineASMat}.
    If  $r$ is not given, we use $r=n_{\text{AS}}$ defined in (\ref{equ:nAS}) by default. 
    \item[Step 2] \textbf{Initialize the polynomial chaos expansion layer.} The polynomial chaos expansion layer is a nonlinear mapping from the reduced active subspace to the outputs, as shown in (\ref{equ:PCE}).  
    The weights $\mat c_{\boldsymbol \alpha}$ is computed by  (\ref{equ:coefficient}). 
    \item[Step 3] \textbf{Construct the ASNet.} Combine the  pre-model (the first $l$ layers of the deep neural network) with the active subspace and polynomial chaos expansion layers as a new network, referred to as  ASNet. 
    \item[Step 4] \textbf{Fine-tuning.} Retrain \ccf{all the parameters in pre-model, active subspace layer and polynomial chaos expansion layer in} ASNet for several epochs by stochastic gradient descent. 
\end{enumerate}

\textbf{Output:} {A new network {ASNet}}
\label{alg:ASNet}
\end{algorithm}

This subsection  proposes a new network called ASNet  that can   reduce both the storage and computational cost.  
Given a deep neural network, we first choose a proper layer   $l$ and project the high-dimensional intermediate neurons to a low-dimensional vector in the active subspace. Afterward,  the post-model is deleted completely and replaced with a nonlinear model that maps the low-dimensional active feature vector to the output directly. This new network,  called  ASNet, has  three parts:
\begin{itemize}
    \item[(1)] {\bf Pre-model:} the pre-model includes the first $l$  layers of a   deep neural network. 
    
    \item[(2)] {\bf Active subspace layer:}   a linear projection from the intermediate neurons  to the   low-dimensional active subspace.   
    
    \item[(3)] {\bf Polynomial chaos expansion layer:} the polynomial chaos expansion  \cite{ghanem1991stochastic, xiu2002wiener}   maps  the active-subspace variables to the output.   
\end{itemize}
\ccf{The initialization for the active subspace layer  and polynomial chaos expansion layer are presented in Sections \ref{sec:ASLayer} and~\ref{subsec:PCE}, respectively.  
We can also retrain all the parameters to increase the accuracy.}   
The whole procedure is illustrated in Fig.~\ref{fig:ASNet}~(b) and  Algorithm~\ref{alg:ASNet}.


 \def\x{3}

\begin{figure}[t]

\begin{subfigure}[b]{0.54\textwidth}
\centering
\begin{tikzpicture}[scale=0.44,
    roundnode/.style={circle, draw=black, fill=none, thick, minimum size=1mm},
    squarednode/.style={rectangle, draw=black, fill=none, very thick, minimum size=3mm},
    dummynode/.style={circle, draw=none, fill=none, very thick, minimum size=0.5mm},
    plainnode/.style={draw=none,fill=none},
    plainarrow/.style={->, thin},
    ]
    plainnoarrow/.style={-, thick},
    ]
    
    \foreach \i in {1,...,4} {
        \node[roundnode, fill=red!80!white] at (0, \i) (node1\i) {};
    }
    \foreach \i in {1,...,6} {
        \node[roundnode, fill=blue!50!white] at (\x, \i-1) (node2\i) {};
    }
    \foreach \i in {1,...,4} {
        \node[roundnode, fill=green!20!white] at (2*\x, \i) (node3\i) {};
    }
    \foreach \i in {1,...,3} {
        \node[roundnode, fill=yellow!50!white] at (3*\x, \i+0.5) (node4\i) {};
    }
    \foreach \i in {1,...,4} {
        \node[dummynode] at (4*\x, \i+0.5) (node5\i) {};
    }
    
    
    \foreach \i in {1,...,4 } {
        \foreach \j in {1,...,6} {
            \draw [plainarrow] (node1\i.east) -- (node2\j.west);
        }
    }
    \foreach \i in {1,...,6 } {
        \foreach \j in {1,...,4} {
            \draw [plainarrow] (node2\i.east) -- (node3\j.west);
        }
    }
    \foreach \i in {1,...,4 } {
        \foreach \j in {1,...,3} {
            \draw [plainarrow] (node3\i.east) -- (node4\j.west);
        }
    }
    \foreach \i in {1,...,3} {
        \draw [plainarrow] (node4\i.east) -- (node5\i.west);
    }
      \node[draw] at (1,6) {\tiny layer 1};  
  \node[draw] at (4.5,6) {\tiny layer 2};
  \node[draw] at (7.5,6) {\tiny ...};
  \node[draw] at (10,6) {\tiny layer L};
  
\end{tikzpicture} 
\caption{A deep neural network} 
\end{subfigure}
\begin{subfigure}[b]{0.44\textwidth}
\centering
\begin{tikzpicture}[scale=0.44,
    roundnode/.style={circle, draw=black, fill=none,  thick, minimum size=1mm},
    squarednode/.style={rectangle, draw=black, fill=none, very thick, minimum size=3mm},
    dummynode/.style={circle, draw=none, fill=none,  thick, minimum size=0.5mm},
    plainnode/.style={draw=none,fill=none},
    plainarrow/.style={->, thin},
    ]
    
    \foreach \i in {1,...,4} {
        \node[roundnode, fill=red!80!white] at (0, \i) (node1\i) {};
    }
    \foreach \i in {1,...,6} {
        \node[roundnode, fill=blue!50!white] at (\x, \i-1) (node2\i) {};
    }
    
   \foreach \i in {1,...,3} {
        \node[roundnode, fill=green!90!white] at (2*\x, \i+0.5) (node3\i) {};
    }
    
    \foreach \i in {1,...,3} {
        \node[dummynode] at (3*\x, \i+0.5) (node4\i) {};
    }
    
    \foreach \i in {1,...,3} {
        \node[dummynode, minimum size=0] at (2.5*\x, \i+0.5) (node35\i) {};
    }

    
    \foreach \i in {1,...,4 } {
        \foreach \j in {1,...,6} {
            \draw [plainarrow] (node1\i.east) -- (node2\j.west);
        }
    } 
    
    \foreach \i in {1,...,6 } {
    \foreach \j in {1,...,3} {
        \draw [plainarrow] (node2\i.east) -- (node3\j.west);
        }
    } 
    
    \foreach \i in {1,...,3} {
        \draw [plainarrow, -] (node3\i.east) to [bend right] (node35\i.east);
    }
    
    \foreach \i in {1,...,3} {
        \draw [plainarrow] (node35\i.east) to [bend left] (node4\i.west);
    }
  \node[draw] at (1,6) {\tiny pre-model};  
  \node[draw] at (4.5,6) {\tiny AS};
  \node[draw] at (7,6) {\tiny PCE};
\end{tikzpicture}

\caption{The proposed ASNet} 
\end{subfigure}

\caption{(a)  The original deep neural network;  (b)  The proposed ASNet with three parts: a pre-model, an active subspace (AS) layer, and a polynomial chaos expansion (PCE) layer.} 
\label{fig:ASNet}
\end{figure}
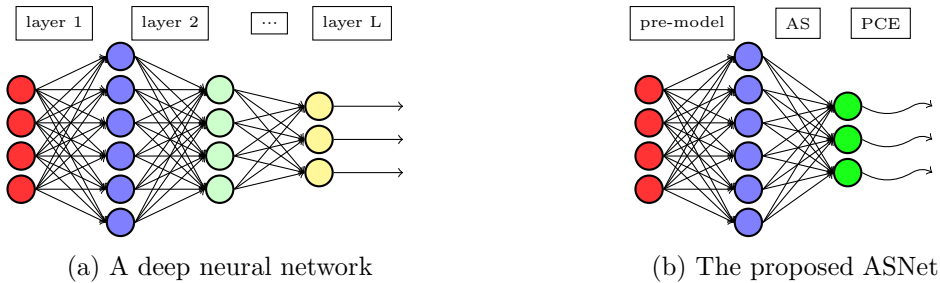

\subsection{The  Active Subspace Layer}
\label{sec:ASLayer}
 \ccf{This subsection presents an efficient method to project the high dimensional neurons to the active subspace.} 
Given a  dataset $\ten D=\{\mat x^1,\ldots,\mat x^m\}$, the empirical covariance matrix is computed by  $\hat{\mat{C}}=\frac1m\sum_{i=1}^m \nabla c_{\ccf l}(\mat  x^i)\nabla c_{\ccf l}(\mat  x^i)^T$. 
\ccf{When ReLU is applied as an activation, $c_l(\mat x)$ is not differentiable. 
In this case,  $\nabla$ denotes the sub-gradient with a little abuse of notation.}

Instead of calculating the eigenvalue decomposition of $\hat{\mat C}$,  we compute  the singular value decomposition   of $\hat{\mat{G}}$ to save the computation cost:
\begin{equation}
\label{equ:G}
   \hat{\mat{G}}  = [\nabla c_{\ccf l}(\mat {x}^1), \ldots, \nabla c_{\ccf l}(\mat {x}^m)]=\hat{\mat{V}}\hat{\bmat{\Sigma}}\hat{\mat{U}}^T   \in\mathbb{R}^{n_l\times m}  \ {\rm  with}\;  \hat{\bmat{\Sigma}}  ={\rm diag} (\hat{\sigma}_1, \cdots, \hat{\sigma}_{n_l}).
\end{equation}
The eigenvectors of $\mat C$ are approximated by the left singular vectors $\hat{\mat{V}}$ and the eigenvalues of $\mat C$ are approximated by the singular values of $\hat{\mat G}$, i.e.,  $\bmat\Lambda\approx\hat{\bmat\Sigma}^2$. 



We use the memory-saving frequent direction method \cite{ghashami2016frequent}  to compute the   $r$ dominant  singular value components, i.e., $\hat{\mat G}\approx\hat{\mat{V}}_r\hat{\bmat{\Sigma}}_r\hat{\mat{U}}_r^T$. Here $r$ is  smaller than the total number of samples. 
The frequent direction approach only stores an $n\times r$ matrix   $\mat S$.  
At the beginning, each column of  $\mat S\in\mathbb{R}^{n\times r}$ is initialized by   a gradient vector. 
Then    the randomized singular value  decomposition \cite{halko2011finding} is used   to generate $\mat S=\mat U\bmat \Sigma\mat V^T$.  
Afterwards, $\mat S$ is updated in the following way,
\begin{equation}\label{equ:svd_thres}
     \mat S\leftarrow\mat V\sqrt{\boldsymbol \Sigma^2-\sigma_r^2}.
\end{equation} 
 Now the last column of $\mat S$  is zero and we replace it with the gradient vector of a new sample. 
By repeating this process, $\mat S\mat S^T$ will approximate $\hat{\mat G} \hat{\mat G}^T$ with a high accuracy and $\mat V$ will approximate the left singular vectors of $\hat{\mat G}$.    
The algorithm framework is presented in  Algorithm~\ref{alg:onlineASMat}.


 \begin{algorithm}[t]
\caption{The frequent direction algorithm for computing the active subspace}
\label{alg:onlineASMat}

\textbf{Input: } A   dataset with $\ccf{m}_{AS}$ input samples $\{\mat x_0^j\}_{j=1}^{m_{AS}}$, a pre-model $f^{\ccf l}_{\text{pre}}(\cdot)$,  a  subroutine for computing   $\nabla c_l(\mat x)$,  and the dimension of truncated singular value decomposition  $r$. 

\begin{algorithmic}[1]
\STATE Select $r$   samples $\mat x_0^i$, compute $\mat x^i=f^{\ccf l}_{\text{pre}}(\mat x_0^i)$, and construct an initial  matrix $\mat S\leftarrow[\nabla c_l(\mat  x^1),\ldots,\nabla c_l(\mat x^r)]$. 
\FOR {t=1, 2, $\ldots,$}
\STATE Compute the singular value  decomposition $\mat V\boldsymbol\Sigma\mat U^T\leftarrow{\rm svd}(\mat S)$, where $\boldsymbol\Sigma=\text{diag}(\sigma_1,\ldots,\sigma_r)$. 
\STATE If the maximal number of samples $\ccf{m}_{AS}$ is reached, stop.  
\STATE Update $\mat S$ by the soft-thresholding (\ref{equ:svd_thres}). 
\STATE Get a new sample $\mat x_{0}^{\text{new}}$, compute $\mat x^{\text{new}} = f_{\text{pre}}^{\ccf l}(\mat x_{0}^{\text{new}})$, and replace the last column of $\mat S$ (now   all zeros) by the  gradient vector $\mat{S}(:,r)\leftarrow\nabla c_l(\mat  x^{\text{new}})$.
\ENDFOR

  \textbf{Output:} The  projection matrix $\mat V\in\mathbb{R}^{n_l\times r}$ and the singular values $\boldsymbol \Sigma\in\mathbb{R}^{r\times r}$. 
\end{algorithmic}
\end{algorithm}

After obtaining $\boldsymbol \Sigma=\text{diag}(\sigma_1,\ldots,\sigma_r)$, we can approximate the number of active neurons as 
\begin{equation}\label{equ:nAS_hat}
    \hat{n}_{l,AS}=\arg\min\left\{i:\quad\frac{\sqrt{\sigma_1^2+\ldots+\sigma_i^2}}{\sqrt{\sigma_1^2+\ldots+\sigma_r^2}}\ge1-\epsilon\right\}. 
\end{equation}
Under the condition that $\sigma_i^2\rightarrow \lambda_i$ for $i=1,\ldots,r$ and  $\lambda_{\ccf i}\rightarrow0$ for $i=r+1,\ldots,n_l$, 
(\ref{equ:nAS_hat}) can approximate $n_{l,AS}$ in  (\ref{equ:nAS}) with a high accuracy. 
Further, the projection matrix $\hat{\mat V}_1$ is chosen as the first $\hat n_{l,AS}$ columns of $\mat V$.
\ccf{The storage cost is reduced from $O(n_l^2)$ to $O(n_lr)$ and the computational cost is reduced from $O(n_l^2r)$ to $O(n_lr^2)$.}


\subsection{Polynomial Chaos Expansion Layer}
\label{subsec:PCE}

We continue to construct a new surrogate model to  approximate the post-model of a deep neural network.   
This  problem  can be regarded as an  uncertainty quantification problem if   we set $\mat z$ as  a random vector. 
\ccf{We choose the nonlinear polynomial because it has   higher expressive power than linear functions.}

By the polynomial chaos expansion    \cite{xiu2002modeling}, the network output $\mat y\in\mathbb{R}^{\ccf{n_L}}$ is approximated by  a linear combination of the  orthogonal polynomial basis functions: 
\begin{equation}\label{equ:PCE}
    \hat{\mat y}\approx  \sum_{|\bmat{\alpha}|=0}^p \mat{c}_{\bmat{\alpha}}\bmat{\phi}_{\bmat{\alpha}}(\mat z), \text{ where }|\bmat{\alpha}|=\alpha_1+\ldots+\alpha_d.
\end{equation} 
Here 
 $\bmat{\phi}_{\bmat{\alpha}}(\mat z)$ is a multivariate polynomial basis function chosen based on the probability density function of $\mat{z}$.  
When the parameters $\mat z=[z_1,\ldots,z_r]^T$ are independent,   both the joint density function  and the  multi-variable basis function can be decomposed into products of one-dimensional functions, i.e., $\rho(\mat z)=\rho_1(z_1)\ldots\rho_r(z_r)$,  $\bmat{\phi}_{\bmat{\alpha}}(\mat z) = \phi_{\alpha_1}(z_1) \phi_{\alpha_2}(z_2)\ldots \phi_{\alpha_r}(z_r).$  
The marginal basis function $\phi_{\alpha_j}(z_j)$ is uniquely determined by the marginal density function $\rho_i(z_i)$. The scatter plot in Fig.~\ref{fig:ASdist}  shows  that the marginal probability density of e$z_i$  is close to a Gaussian distribution. 

Suppose  $\rho_i(z_i)$ follows a Gaussian distribution, then  $\phi_{\alpha_j}(z_j)$ will be  a Hermite polynomial \cite{lide2018handbook}, i.e., 
\begin{equation}\label{equ:basis_Hermite}
    \phi_0(z)=1,\ \phi_1(z)=z,\ \phi_2(z)=4z^2-2,\ \phi_{p+1}(z)=2z\phi_{p}(z)-2p\phi_{p-1}(z). 
\end{equation} 
In   general, the elements in $\mat z$ can be  non-Gaussian correlated. In this case, the basis functions $\{ \bmat{\phi}_{\bmat{\alpha}}(\mat z) \}$ can be built via the Gram-Schmidt approach described in~\cite{cui2018stochastic}.

 \begin{figure}[t]
    \centering
\includegraphics[width=2.1in]{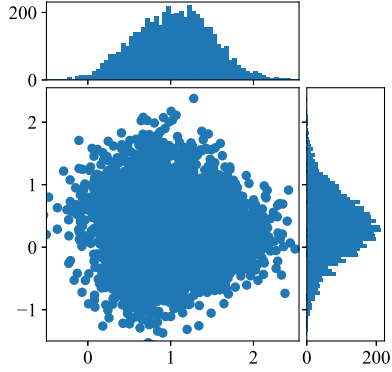}
    \caption{\small Distribution of the first two active subspace variables at the 6-th layer of VGG-19 for CIFAR-10. }
    \label{fig:ASdist}
\end{figure}

  

The coefficient $\mat{c}_{\bmat{\alpha}}$ can be computed by a linear least-square optimization.  
Denote $\mat z^j=\hat{\mat V}_1^Tf^{\ccf l}_{\text{pre}}(\mat x_0^j)$ as the random samples and $\mat y^j$ as the network output for  $j=1,\ldots,\ccf{m}_{\text{PCE}}$. The coefficient vector $\mat c_{\bmat{\alpha}}$ can be computed by 
\begin{equation}\label{equ:coefficient}
    \min_{\{\mat c_{\bmat{\alpha}}\}}\quad  \ccf{\frac{1}{m_{\text{PCE}}}}\sum_{j=1}^{m_{\text{PCE}}}
       \|\mat y^j- \sum_{|\bmat{\alpha}|=0}^p \mat{c}_{\bmat{\alpha}}\bmat{\phi}_{\bmat{\alpha}}(\mat z^j)\|^2.
\end{equation}
Based on the Nyquist-Shannon sampling theorem, the number of samples to train $\mat c_{\bmat \alpha}$ needs to satisfy   $m_{\text{PCE}}\ge 2n_{\text{basis}}=2\binom{r+p}{p}$. 
However, this number can be reduced to a smaller set of  ``important'' samples  by the D-optimal design  \cite{zankin2018gradient} or the sparse regularization approach \cite{cui2019high}.

The polynomial chaos expansion builds a  surrogate model to approximate the deep neural network output $\mat y$. 
This idea is similar to  the  knowledge  distillation \cite{hinton2015distilling}, where a pre-trained teacher network  teaches a  smaller student network to learn the  output feature.  
However, our polynomial-chaos layer uses one  nonlinear projection whereas the knowledge distillation uses a series of layers.  Therefore, the polynomial chaos expansion is more efficient in terms of computational and storage cost. 
\ccf{The polynomial chaos expansion layer is  different from the polynomial activation because the dimension of $\mat z$ may be different from that of output $\mat y$.}

\ccf{The problem (\ref{equ:coefficient}) is  convex   and any first order method can get a global optimal solution. 
Denote the optimal coefficients as $\mat c_{\alpha}^*$ and the finial objective value as $\delta^*$, i.e., 
\begin{equation}\label{equ:PCE_optim}
 \delta^* =  \ccf{\frac{1}{m_{\text{PCE}}}}\sum_{j=1}^{m_{\text{PCE}}}\|  \mat y^j-  \psi^*(\mat z^j)\|^2, \text{  \  where \ } \psi^*(\mat z^j)=\sum_{|\bmat{\alpha}|=0}^p \mat{c}^*_{\bmat{\alpha}}\bmat{\phi}_{\bmat{\alpha}}(\mat z^j).
\end{equation} 
If $\delta^*=0$, the polynomial chaos expansion is a good approximation to the original deep neural network on the training dataset. 
However, the approximation loss of the testing dataset may   be large because of the overfitting phenomena.}

\ccf{The objective function in (\ref{equ:coefficient}) is an empirical approximation to the expected error 
\begin{equation}\label{equ:exp_PCE_error}
\mathbb{E}_{(\mat z,\mat y)}[ \|\mat y- \psi(\mat z)\|^2 ], \text{  where  } \psi(\mat z)=\sum_{|\bmat{\alpha}|=0}^p \mat{c}_{\bmat{\alpha}}\bmat{\phi}_{\bmat{\alpha}}(\mat z).
\end{equation}
According to the Hoeffding's inequality \cite{hoeffding1994probability}, the expected error  (\ref{equ:exp_PCE_error}) is close to the empirical error (\ref{equ:coefficient}) with a high probability. 
Consequently,  the loss for ASNet with polynomial chaos expansion layer is bounded as follows. 

\begin{lemma}\label{lem:err_PCE}
Suppose that the optimal solution for solving problem (\ref{equ:coefficient}) is  $\mat c_{\alpha}^*$, the optimal polynomial chaos expansion is $\psi^*(\mat z)$, and the optimal residue is $\delta^*$. 
Assume that there exist consts $a, b$ such that for all $j$, $\|\mat y^j - \psi^*(\mat z^j)\|^2\in [a,b]$.  
Then the loss of ASNet will be upper bounded
\begin{equation}
\mathbb{E}_{\mat z}[(\text{loss}(\psi^*(\mat z)))^2]\le 2\mathbb{E}_{\mat x_0}[(c_0(\mat x_0))^2] + 2n_L(\delta^*+t)   \ \text{ w.p. } 1-\gamma^*,
\end{equation}
where $t$ is a user-defined threshold, and $\gamma^*=\exp(-\frac{2t^2m_{\text{PCE}}}{(b-a)^2})$. 
\end{lemma}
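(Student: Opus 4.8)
The plan is to combine three ingredients already available in the text: (i) the pointwise identity $\text{loss}(\mat y)=c_{0}(\mat x_0)$ that relates the true network output at an input $\mat x_0$ to the loss of the original network; (ii) the Lipschitz continuity of the cross entropy loss with modulus $\sqrt{n_L}$, which was extracted inside the proof of Lemma~\ref{lem: ASerror} from the bound $\|\nabla\text{loss}(\mat y)\|\le\sqrt{n_L}$; and (iii) Hoeffding's inequality, which converts the \emph{empirical} residual $\delta^*$ of the fitted expansion $\psi^*$ into a high-probability bound on its \emph{expected} $L^2$ approximation error. The problem (\ref{equ:coefficient}) being convex, the optimal $\psi^*$ and $\delta^*$ in (\ref{equ:PCE_optim}) are well defined, so these are the only facts needed.

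First I would record the identity. Since $f=f^{l}_{\text{post}}\circ f^{l}_{\text{pre}}$ and $\mat y$ denotes the network output, for an input $\mat x_0$ with intermediate neuron $\mat x=f^{l}_{\text{pre}}(\mat x_0)$ we have $\mat y=f^{l}_{\text{post}}(\mat x)$, hence $\text{loss}(\mat y)=c_{l}(\mat x)=c_{0}(\mat x_0)$. Then, writing $\text{loss}(\psi^*(\mat z))=\bigl[\text{loss}(\psi^*(\mat z))-\text{loss}(\mat y)\bigr]+\text{loss}(\mat y)$, using $(p+q)^2\le 2p^2+2q^2$, and applying the $\sqrt{n_L}$-Lipschitz bound together with the identity, I obtain
\begin{align*}
\mathbb{E}_{\mat z}\bigl[(\text{loss}(\psi^*(\mat z)))^2\bigr]
&\le 2\,\mathbb{E}\bigl[(\text{loss}(\psi^*(\mat z))-\text{loss}(\mat y))^2\bigr]+2\,\mathbb{E}\bigl[(\text{loss}(\mat y))^2\bigr]\\
&\le 2 n_L\,\mathbb{E}\bigl[\|\psi^*(\mat z)-\mat y\|^2\bigr]+2\,\mathbb{E}_{\mat x_0}\bigl[(c_{0}(\mat x_0))^2\bigr].
\end{align*}
It remains to bound the first term on the right.

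For this I would set $\xi_j:=\|\mat y^j-\psi^*(\mat z^j)\|^2$, which by hypothesis lies in $[a,b]$; its empirical average over $j=1,\dots,m_{\text{PCE}}$ equals $\delta^*$ by (\ref{equ:PCE_optim}), and its mean equals the expected error (\ref{equ:exp_PCE_error}) evaluated at $\psi^*$, i.e.\ $\mathbb{E}[\|\mat y-\psi^*(\mat z)\|^2]$. The one-sided Hoeffding bound then gives $\mathbb{P}\bigl(\mathbb{E}[\|\mat y-\psi^*(\mat z)\|^2]-\delta^*>t\bigr)\le\exp\!\bigl(-2t^2 m_{\text{PCE}}/(b-a)^2\bigr)=\gamma^*$, so on the complementary event (probability $1-\gamma^*$) we have $\mathbb{E}[\|\psi^*(\mat z)-\mat y\|^2]\le\delta^*+t$. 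Substituting into the display above yields $\mathbb{E}_{\mat z}[(\text{loss}(\psi^*(\mat z)))^2]\le 2\mathbb{E}_{\mat x_0}[(c_0(\mat x_0))^2]+2n_L(\delta^*+t)$ with probability $1-\gamma^*$, which is the claim.

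The step I expect to be the main obstacle is the Hoeffding application: $\psi^*$ is itself fitted on the same samples $\{(\mat z^j,\mat y^j)\}$ that define $\delta^*$, so strictly the $\xi_j$ are not independent and a fully rigorous statement needs either a held-out validation set or a uniform-deviation bound over the finite-dimensional polynomial class (a covering-number argument). In the write-up I would either add the corresponding hypothesis or state the conclusion conditionally on a fixed $\psi^*$. Everything else — the convexity and global solvability of (\ref{equ:coefficient}), the Lipschitz constant $\sqrt{n_L}$ of the cross entropy loss, and the identity $\text{loss}(\mat y)=c_0(\mat x_0)$ — is already in place from the preceding material, so the remainder of the argument is routine.
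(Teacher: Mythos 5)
Your proposal is correct and follows essentially the same route as the paper's proof: the same $(p+q)^2\le 2p^2+2q^2$ decomposition, the same $\sqrt{n_L}$-Lipschitz bound on the cross entropy, the identity $\text{loss}(\mat y)=c_0(\mat x_0)$, and the same one-sided Hoeffding step applied to $\|\mat y^j-\psi^*(\mat z^j)\|^2\in[a,b]$. Your closing caveat is well taken — the paper simply asserts independence of the $\ten T^j$ from independence of the samples, whereas in fact $\psi^*$ is fitted on those same samples, so a fully rigorous version would indeed need a held-out set or a uniform bound over the polynomial class, a subtlety the paper does not address.
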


\begin{proof}
Since the cross entropy loss function is $\sqrt{n_L}$-Lipschitz continuous, we have  
\begin{equation}\label{equ:bound_Lip}
   \mathbb{E}_{(\mat y,\mat z)}[(\text{loss}(\mat y)- \text{loss}(\psi^*(\mat z)))^2] \le n_L  \mathbb{E}_{(\mat y,\mat z)}[\|\mat y - \psi^*(\mat z)\|^2], \end{equation}
Denote $\ten T^j = \|\mat y^j - \psi^*(\mat z^j)\|^2$ for $i=1,\ldots,n_L$. 
$\{\ten T^j\}$ are independent under the assumption that the data samples are independent. 
By the Hoeffding's inequality, for any constant $t$, it holds that 
\begin{equation} 
  \mathbb{E}[\ten T] \le \frac{1}{m_{\text{PCE}}}\sum_j \ten T^j  + t\ \ \text{ w.p. } 1-\gamma^*,  
\end{equation} 
with $\gamma^*=\exp(-\frac{2t^2m_{\text{PCE}}}{(b-a)^2})$. 
Equivalently, 
\begin{equation}\label{equ:exp_PCE_Bound}
	\mathbb{E}_{(\mat y,\mat z)}[ \|\mat y - \psi^*(\mat z)\|^2 ]\le \delta^* + t\ \ \text{ w.p. } 1-\gamma^*,  
\end{equation} 
Consequently, there is 
\begin{align*}
&\mathbb{E}_{\mat z}[(\text{loss}(\psi^*(\mat z)))^2]\\
\le& 2\mathbb{E}_{\mat y}[(\text{loss}(\mat y))^2] + 2\mathbb{E}_{(\mat y,\mat z)}[(\text{loss}(\psi^*(\mat z))-\text{loss}(\mat y))^2] \\
\le& 2\mathbb{E}_{\mat x_0}[(c_0(\mat x_0))^2]+
2n_L(\delta^*+t)   \ \text{ w.p. } 1-\gamma^*.
\end{align*}
The last inequality follows from $c_0(\mat x_0)=c_l(\mat x_l)=\text{loss}(\mat y)$, equations (\ref{equ:bound_Lip}) and (\ref{equ:exp_PCE_Bound}). 
This completes the proof.
\end{proof}

 Lemma \ref{lem:err_PCE} shows with a high probability $1-\gamma^*$, the expected error of ASNet without fine-tuning is bounded by the pre-trained error of the original network, the accuracy loss in solving the polynomial chaos subproblem (\ref{equ:PCE_optim}), and the number of classes $n_L$.
 The probability $\gamma^*$ is controlled by the threshold $t$ as well as the number of training samples $m_{\text{PCE}}$.  

 In practice, we always re-train ASNet for several epochs and the accuracy of ASNet is beyond the scope of Lemma \ref{lem:err_PCE}. 
}

\subsection{Structured Re-training of ASNet}
\label{subsec:sparse}
 
The  pre-model can be further compressed by various techniques such as network pruning and sharing~\cite{han2015deep}, low-rank factorization~\cite{novikov2015tensorizing, lebedev2014speeding, garipov2016ultimate}, or data quantization~\cite{deng2018gxnor, courbariaux2016binarized}. Denote $\boldsymbol \theta$ as   the weights in ASNet and $\{\mat x_0^1,\ldots,\mat x_0^m\}$ as  the training dataset. 
\ccf{Here, $\boldsymbol \theta$ denotes all the parameters in the pre-model, active subspace layer, and the polynomial chaos expansion layer.} 
We  re-train the network by solving the following regularized optimization problem:
\begin{equation}
\label{eq:sparse_optimization}
\boldsymbol \theta^* =\arg\min_{\boldsymbol \theta}\  \frac1m\sum_{i=1}^m\text{loss}(f(\boldsymbol \theta;\mat x_0^i))+ \lambda R(\boldsymbol \theta). 
\end{equation}
Here $(\mat x_0^i,\mat y^i)$ is a training sample, $m$ is the total  number of training samples, $\text{loss}(\cdot)$ is the  cross-entropy loss function, $R(\boldsymbol \theta)$ is a regularization function, and $\lambda$ is a regularization parameter. 
Different regularization functions can result in different model structures. For instance, an $\ell_1$ regularizer $R(\boldsymbol \theta)=\|\boldsymbol \theta\|_1$ \cite{aghasi2017net,scardapane2017group,ye2019progressive}  will return a sparse weight, an  $\ell_{1,2}$-norm regularizer will result in a column-wise sparse weights, a nuclear norm regularizer will result in   low-rank weights. 
At each iteration, we solve (\ref{eq:sparse_optimization}) by a  stochastic proximal gradient decent algorithm  \cite{shalev2014accelerated} 
\begin{equation}\label{equ:sparseSGD}
\boldsymbol \theta^{k+1} =\underset{\boldsymbol \theta}{\text{argmax}}\quad (\boldsymbol \theta-\boldsymbol \theta^{k})^T\mat g^k+\frac{1}{2\alpha_k}\|\boldsymbol \theta-\boldsymbol \theta^{k}\|_2^2+ \lambda R(\boldsymbol{\theta}). 
\end{equation}
Here   $\mat g^k= \frac{1}{|\mathcal{B}_k|} \sum_{i\in\mathcal{B}_k}\nabla_{\boldsymbol \theta}\text{loss}(f(\boldsymbol \theta;\mat x_0^i), \mat y^i)$ is the stochastic gradient, $\mathcal{B}_k$ is a batch at the $k$-th step,  and   $\alpha_k$ is the stepsize.  

In this work, we chose the $\ell_1$  regularization to get sparse weight matrices. 
In this case, problem (\ref{equ:sparseSGD})   has a closed-form solution:
\begin{equation} 
   \boldsymbol \theta^{k+1} = \mathcal{S}_{\alpha_k\lambda}(\boldsymbol \theta^{k}-\alpha_k \mat g^k), 
   \end{equation} 
  where $\ten S_{\lambda}(\mat  x)=\mat x\odot \max(0,1-\lambda/|\mat  x|)$  is a soft-thresholding operator.

 \section{Active-Subspace for Universal Adversarial Attacks}
 \label{sec:adv}

This section investigates how to generate a universal adversarial attack by the active-subspace method. Given a  function $f(\mat x)$, the maximal perturbation direction is defined by
\begin{equation}\label{equ:AS_per}
    \mat v_{\delta}^*=\underset{\|\mat v\|_2\le\delta}{\text{argmax}} \quad \mathbb E_{\mat x}[(f(\mat x+\mat v)-f(\mat x))^2]. 
\end{equation}
Here, $\delta$ is a user-defined perturbation upper bound. By   the first order Taylor expansion, we have  $f(\mat x+\mat v)\approx f(\mat x) + \nabla f(\mat x)^T\mat v$, and problem (\ref{equ:AS_per}) can be reduced to 
\begin{equation}\label{equ:AS_lin}
    \mat v_{AS}=\underset{\|\mat v\|_2=1}{\text{argmax}} \quad \mathbb E_{\mat x}[(\nabla f(\mat x)^T\mat v)^2] =\underset{\|\mat v\|_2=1}{\text{argmax}} \quad  \mat v^T \mathbb E_{\mat x}[\nabla f(\mat x)\nabla f(\mat x)^T]\mat v. 
\end{equation}
The vector $\mat v_{AS}$ is exactly  the dominant eigenvector of the covariance matrix of $\nabla f(\mat x)$. The  solution for (\ref{equ:AS_per}) can be approximated by $+\delta\mat v_{AS}$ or  $-\delta\mat v_{AS}$. Here,   both $\mat v_{AS}$ and $-\mat v_{AS}$ are solutions of  (\ref{equ:AS_lin}) but their effect on (\ref{equ:AS_per}) are different.

 \begin{figure}[t]
    \centering
    \includegraphics[width=\textwidth]{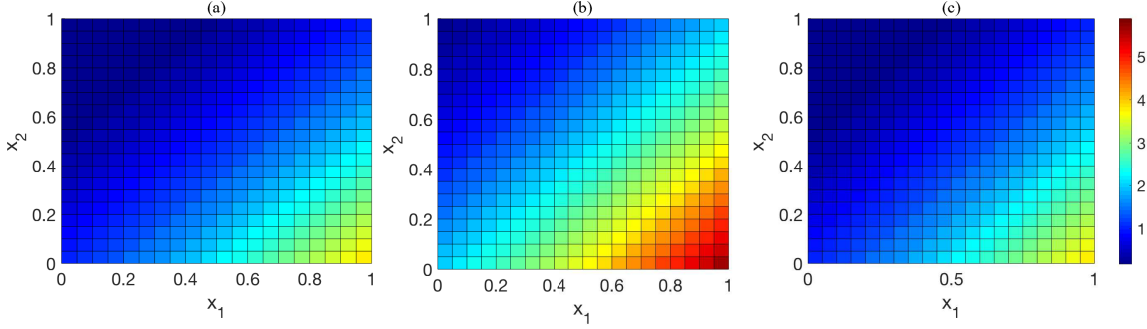}
    \caption{\small Perturbations along the directions of an active-subspace direction and  of principal component, respectively. (a) The function $f(\mat x)=\mat a^T\mat x-b$. (b) The perturbed function along the active-subspace direction. (c) The perturbed function along the principal component analysis direction.
    }
    \label{fig:AS_illustrate}
\end{figure}

\begin{example2}
Consider a two-dimensional function $f(\mat x)=\mat a^T\mat x-b$ with $\mat a=[1,-1]^T$ and $b=1$, \ccf{and $\mat x$ follows a uniform distribution} in a two-dimensional square domain $[0,1]^2$, as shown in Fig.~\ref{fig:AS_illustrate}~(a). 
It follows from direct computations that $\nabla f(\mat x)=\mat a$ and the covariance matrix $\mat C =  \mat a\mat a^T$. The  dominant  eigenvector of $\mat C$ or  the active-subspace direction is $\mat v_{AS}=\mat a/\|\mat a\|_2=[1/\sqrt{2},-1/\sqrt{2}]$.  
We apply $\mat v_{AS}$ to perturb $f(\mat x)$ and plot    $f(\mat x+\delta \mat v_{AS})$ in  Fig.~\ref{fig:AS_illustrate}~(b), which shows a significant difference even for  a small permutation  $\delta=0.3$.  
Furthermore, we plot the perturbed function along the 
 first principal component direction  $\mat w_1=[1/\sqrt{2},1/\sqrt{2}]^T$ in Fig.~\ref{fig:AS_illustrate}~(c). 
 \ccf{Here, $\mat w_1$ is the eigenvector of the covariance matrix $\mathbb E_{\mat x}[\mat x\mat x^T]=\left[\begin{array}{cc}
1/3 & 1/4\\
1/4 & 1/3
 \end{array}\right]$.}
 However, $\mat w_1$ does not result in any perturbation because $\mat a^T\mat w_1=0$. 
This example indicates the difference between the active-subspace and  principal component analysis: the active-subspace direction can  capture the sensitivity information of $f(\mat x)$ whereas the principal component is independent of  $f(\mat x)$.  
\end{example2}

\subsection{Universal Perturbation of Deep Neural Networks}

Given a dataset $\ten D$ and a classification function  $j(\mat x)$  that maps an input sample to an output label.  The universal perturbation seeks for a vector $\mat v^*$  whose norm is upper bounded by $\delta$,  such that the class label can be perturbed with a high probability, i.e., 
\begin{equation}\label{equ:UAP}
  \mat v^*=\underset{\|\mat v\|\le\delta}{\text{argmax}}\  \text{prob}_{\mat x\in\ten D}[j(\mat x+\mat v) \neq j(\mat x)]=\underset{\|\mat v\|\le\delta}{\rm argmax}\ \mathbb{E}_{\mat x}[1_{j(\mat x+\mat v) \neq j(\mat x)}],
\end{equation}
where $1_d$ equals  one if the condition $d$ is satisfied and zero otherwise.  
Solving  problem (\ref{equ:UAP}) directly is challenging because both $1_d$ and $j(\mat x)$ are discontinuous. 
By replacing $j(\mat x)$ with the    loss function $c(\mat x)=\text{loss}(f(\mat x))$ and the indicator function $1_d$ with a   quadratic function,  we reformulate problem (\ref{equ:UAP}) as 
\begin{equation}\label{equ:UAP_L2}
    \max_{\mat v}\quad \mathbb{E}_{\mat x}[\left(c(\mat x+\mat v)-c(\mat x)\right)^2] \quad \text{s.t.}\quad \|\mat v\|_2\le \delta.
\end{equation} 

 The ball-constrained optimization problem  (\ref{equ:UAP_L2}) can be solved by various numerical techniques such as the spectral gradient descent method \cite{birgin2000nonmonotone} and the limited-memory projected quasi-Newton \cite{schmidt2009optimizing}. However, these methods can only guarantee convergence to a local stationary point. Instead, we are interested in computing a direction that can achieve a better objective value by a heuristic algorithm.
 

\subsection{Recursive Projection Method}

Using the first order Taylor expansion    $c(\mat x+\mat v)\approx c(\mat x) + \mat v^T\nabla c(\mat x)$, we reformulate problem (\ref{equ:UAP_L2})   as a ball constrained  quadratic problem 
\begin{equation}\label{equ:UAP_eig}
    \max_{\mat v}\quad \mat v^T \mathbb{E}_{\mat x}[\nabla c(\mat x) \nabla c(\mat x)^T]\mat v \quad \text{s.t.}\quad \|\mat v\|_2\le \delta.  
\end{equation} 
Problem (\ref{equ:UAP_eig}) is easy to solve because its closed-form solution is exactly  the dominant eigenvector of the covariance  matrix $\mat C=\mathbb{E}_{\mat x}[\nabla c(\mat x) \nabla c(\mat x)^T]$ or the first active-subspace direction.  
However,  the   dominant  eigenvector in  (\ref{equ:UAP_eig}) may not  be  efficient because $c(\mat x)$ is nonlinear.  
Therefore, we compute $\mat v$   recursively  by 
\begin{equation}\label{equ:proj}
    \mat v^{k+1}=\text{proj}(\mat v^k+s^k \ccf{d_{\mat v}^k}),
\end{equation}
where $\text{proj}(\mat v) = \mat v \times \min(1,  \delta/\|\mat v\|_2)$, 
  $s^k$ is the stepsize, and $\ccf{d_{\mat v}^k}$ is approximated by 
\begin{equation}\label{equ:UAP_eig_i}
   \ccf{d_{\mat v}^k}=\underset{\ccf{d_{\mat v}}}{\text{argmax}}\quad \ccf{d_{\mat v}^T} \mathbb{E}_{\mat x}\left[\nabla c\left(\mat x+\mat v^k\right) \nabla c\left(\mat x+\mat v^k\right)^T\right]\ccf{d_{\mat v}}, \ \text{s.t.}\ \|d_{\mat v}\|_2\le 1.
\end{equation}
Namely, $\ccf{d_{\mat v}^k}$ is the dominant  eigenvector of  $\mat C^k=\mathbb{E}_{\mat x}\left[\nabla c\left(\mat x+\mat v^k\right) \nabla c\left(\mat x+\mat v^k\right)^T\right]$. 
Because $\ccf{d_{\mat v}^k}$ maximizes the changes in $\mathbb E_{\mat x}[(c(\mat x+\mat v+\ccf{d_{\mat v}}) - c(\mat x+\mat v))^2]$, we expect that the attack ratio keeps increasing, i.e.,   $r( \mat v^{k+1}; \ten D)\ge r( \mat v^k; \ten D)$, where  
\begin{equation}
    \label{equ:attratio}
    r( \mat v; \ten D) = \frac{1}{|\ten D|}\sum_{\mat x^i\in\ten D}1_{j(\mat x^i+\mat v)\neq j(\mat x^i)}. 
\end{equation} 
The backtracking line search  approach \cite{armijo1966minimization} is employed to choose $s^k$ such that the attack ratio of  $\mat v^k+s^k\ccf{d_{\mat v}^k}$  is higher than the attack ratio of both   $\mat v^k$ and  $\mat v^k-s^k\ccf{d_{\mat v}^k}$, i.e., 
\begin{equation}\label{equ:sz}
 s^k=\min_i \{ s^k_{i,t}:  r(\mat v^{k+1}_{i,t}; \ten D)> \max(r(\mat v^{k+1}_{i,-t}; \ten D) , r(\mat v^k; \ten D)\}, 
\end{equation}
where   $s^k_{i,t}=(-1)^ts_0\gamma^i$, $t\in\{1,-1\}$, $s_0$ is the initial stepsize, $\gamma <1$ is the decrease ratio, and $\mat v^{k+1}_{i,t}=\text{proj}(\mat v^k+s^{k+1}_{i,t}\ccf{d_{\mat v}^k})$. 
If such a stepsize $s^k$ exists, we update $\mat v^{k+1}$ by (\ref{equ:proj}) and repeat the process. Otherwise, we record the number of failures and stop the algorithm when the number of failure is greater than a threshold. 

The overall flow is summarized in Algorithm~\ref{alg:attack_UAS}. 
\ccf{In practice, instead of using the whole dataset to train this attack vector, we use a subset $\ten D^0$. 
The impact for different  number of samples is discussed  in section~\ref{sub:adversal_CIFAR10}.}

\begin{algorithm}[t]
\caption{Recursive Active Subspace Universal Attack}
\label{alg:attack_UAS}
\textbf{Input: } {A pre-trained deep neural network  denoted as $c(\mat x)$, a classification oracle $j(\mat x)$, a   training dataset $\ten D^0$, an  upper bound  for  the attack vector $\delta$, an initial stepsize  $s_0$,  a decrease ratio $\gamma <1$, and the parameter in the stopping criterion $\alpha$.}

\begin{algorithmic}[1]
 \STATE Initialize the attack vector as $\mat v^0=0$.
 
\FOR {$k=0,1,
\ldots$} 
  \STATE  
    Select the training dataset as $\ten D=\{ \mat x^i+\mat v^k:\ \mat x^i\in\ten D^0 \text{ and } j(\mat x^i+\mat v^k)=j(\mat x^i)\}$, then compute the dominate active subspace direction $d\mat v$ by  Algorithm~\ref{alg:onlineASMat}.  
    
  \FOR {$i=0,1,...I$}
    \STATE Let $s^k_{i,\pm}=(-1)^{\pm}s_0\gamma^i$  and $\mat v^{k+1}_{i,\pm}=\text{proj}(\mat v^k+s^{k+1}_{i,\pm}\ccf{d_{\mat v}^k})$ . Compute the attack ratios $r(\mat v^{k+1}_{i,1})$ and $r(\mat v^{k+1}_{i,-1})$ by (\ref{equ:attratio}). 
    \STATE If either $r(\mat v^{k+1}_{i,1})$ or $r(\mat v^{k+1}_{i,-1})$ is greater than $r(\mat v^k)$, stop the process. Return $s^k=(-1)^ts^k_{i,1}$, where $t=1$ if  $r(\mat v^{k+1}_{i,1})\ge r(\mat v^{k+1}_{i,-1})$ and $t=-1$ otherwise.  
  \ENDFOR
  
  If no stepsize $s^k$ is returned, let $s^k=s_0r^I$  and record this step as a failure. 
 Compute  the next iteration $\mat v^{k+1}$ by the projection  (\ref{equ:proj}).  

   \STATE    If the number of failure is greater the threshold \ccf{$\alpha$}, stop.  
 \ENDFOR
 
\textbf{Output:} {The universal active adversarial  attack vector $\mat v_{AS}$.}
\end{algorithmic}
\end{algorithm}

\section{Numerical Experiments}
\label{sec:Numerical}

In this section, we show the power of active-subspace in revealing the number of active neurons, compressing neural networks, and computing the universal  adversarial perturbation. 
All codes are implemented  in  PyTorch and are available  online\footnote{\url{https://github.com/chunfengc/ASNet}}. 

\subsection{Structural Analysis and Compression}

We test the  ASNet constructed by Algorithm~\ref{alg:ASNet}, and set the polynomial order as  $p=2$,  the number of active neurons   as $r=50$, and the  threshold  in Equation \eqref{equ:nAS} as  $\epsilon=0.05$ \ccf{on default}. 
\ccf{Inspired by the knowledge distillation~\cite{hinton2015distilling}, we retrain all the parameters in the ASNet by minimizing the following loss function
\begin{equation*}
\min_{\boldsymbol \theta}\ \sum_{i=1}^m \beta H\left(\text{ASNet}_{\boldsymbol \theta}(\mat x_0^i),f(\mat x_0^i)\right) +  (1-\beta) H\left(\text{ASNet}_{\boldsymbol \theta}(\mat x_0^i), \mat y^i\right). 
\end{equation*}
Here, the cross entropy $H(\mat p, \mat q)=\sum_j s(\mat p)_j\log s(\mat q)_j$, the softmax function $s(\mat x)_j=\frac{\exp(x_j)}{\sum_j \exp(x_j)}$,  and the parameter  $\beta=0.1$ on default. 
We retrain ASNet for 50 epochs by ADAM  \cite{kingma2014adam}. 
The stepsizes for the pre-model are set as $10^{-4}$ and $10^{-3}$ for VGG-19 and  ResNet,  
and the stepsize for the active subspace layer and the polynomial chaos expansion layer is set as $10^{-5}$,   respectively,  

We also  seek for   sparser weights in ASNet by the proximal stochastic gradient descent  method in Section~\ref{subsec:sparse}.  
On default, we set the stepsize as $10^{-4}$ for the pre-model and $10^{-5}$ for the active subspace layer and the polynomial chaos expansion layer. The maximal epoch is set as 100.  
The obtained sparse model is denoted as ASNet-s. 

In all figures and tables, the numbers in the bracket of ASNet($\cdot$) or ASNet-s($\cdot$) indicate the index of a cut-off layer. 
 We report the performance  for different cut-off layers in terms of {\it accuracy, storage, and computational complexities}.
}


\ccf{
\subsubsection{Choices of Parameters} 
We first show the influence of number of reduced neurons $r$,  tolerance $\epsilon$, and cutting-off layer index $l$ of VGG-19 on CIFAR-10 in   Table \ref{tab:comp_param}.  The VGG-19 can achieve 93.28\% testing accuracy with 76.45 Mb stroage consumption.  
Here, $\epsilon=\frac{\lambda_{r+1}+\ldots+\lambda_n}{\lambda_1+\ldots+\lambda_n}$. 
For different choices of $r$, 
we display the corresponding tolerance $\epsilon$,  the storage speedup compared with the original teacher network, and the testing  accuracy reduction for  ASNet before and after fine-tuning compared with the original teacher network.  

 Table \ref{tab:comp_param} shows that   when the cutting-off layer is fixed, a larger $r$ usually results in a smaller tolerance $\epsilon$ and  a smaller  accuracy reduction but also a smaller storage speedup. 
  This is corresponding to Lemma \ref{lem: ASerror} that the error of ASNet before fine-tuning is upper bounded by $O(\epsilon)$.  
Comparing   $r=50$ with $r=75$, we find that $r=50$ can achieve almost the same accuracy with $r=75$ with a higher storage speedup. 
$r=50$ can even achieve better accuracy than $r=75$ in layer 7   probably because of overfitting. This guides us to chose $r=50$ in the following numerical experiments. 
For different  layers, we see a later cutting-off layer index can  produce a lower accuracy reduction but a smaller storage speedup. 
In other words, the choice of layer index is a trade-off between accuracy reduction with storage speedup. 
}

{\color{red}

\begin{table}[t]
 \caption{\ccf{Comparison of number of neurons $r$ of VGG-19 on CIFAR-10. For the stroage speedup, the higher is bettter. For the accuracy reduction before or after finetuning, the lower is better.}}
  \label{tab:comp_param}
    \centering
    {\small
    \begin{tabular}{|@{\hspace{0.1cm}}c@{\hspace{0.1cm}}
    |@{\hspace{0.1cm}}c@{\hspace{0.1cm}}c@{\hspace{0.02cm}}c@{\hspace{0.1cm}}c@{\hspace{0.1cm}}
    |@{\hspace{0.1cm}}c@{\hspace{0.1cm}}c@{\hspace{0.02cm}}c@{\hspace{0.1cm}}c@{\hspace{0.1cm}}
    |@{\hspace{0.1cm}}c@{\hspace{0.1cm}}c@{\hspace{0.02cm}}c@{\hspace{0.1cm}}c@{\hspace{0.1cm}}|}
    \hline
 &  \multicolumn{4}{c|@{\hspace{0.1cm}}}{$r=25$} & \multicolumn{4}{c|@{\hspace{0.1cm}}}{$r=50$} & 
 \multicolumn{4}{c|}{$r=75$}\\ \hline
 & $\epsilon$ & Storage & \multicolumn{2}{c|@{\hspace{0.1cm}}}{\footnotesize Accu. Reduce} 
 & $\epsilon$ & Storage &  \multicolumn{2}{c|@{\hspace{0.1cm}}}{\footnotesize Accu. Reduce} 
 & $\epsilon$ & Storage &  \multicolumn{2}{c|}{\footnotesize Accu. Reduce} \\  
 & && {\footnotesize Before} & {\footnotesize After} 
 & & &{\footnotesize Before} & {\footnotesize After} 
 & && {\footnotesize Before} & {\footnotesize After}  \\\hline
ASNet(5) & 0.34 & \textbf{20.7$\times$} &  7.06 &2.82  & 0.18 & 14.4$\times$ & 4.40 & 1.82  &0.11 & 11.0$\times$ &  \textbf{3.64}  & \textbf{1.66} \\\hline 
ASNet(6) & 0.24 & \textbf{12.8$\times$} &  2.14 &0.59  & 0.11 & 10.1$\times$ & 1.62 & 0.27  &0.05 & 8.3$\times$ &  \textbf{1.40}  & \textbf{0.21} \\\hline 
ASNet(7) & 0.15 & \textbf{9.3$\times$} &  0.79 &0.11  & 0.06 & 7.8$\times$ & \textbf{0.63} & \textbf{-0.10}  &0.03 & 6.7$\times$ &  0.77  & 0.00 \\\hline 
    \end{tabular}
    }
\end{table}
}

 \subsubsection{Efficiency of Active-subspace} 
We  show the effectiveness of  ASNet  constructed  by  Steps 1-3 of Algorithm~\ref{alg:ASNet} without fine-tuning. 
We investigate the following three properties. 
(1) {\bf Redundancy of neurons.}  The distributions of the first 200 singular values of the   matrix $\hat{\mat G}$ (defined in \eqref{equ:G}) are plotted in Fig.~\ref{fig:lmd_PCA_LR}~(a). The singular values decrease almost exponentially  for  layers $l\in\{4,5,6,7\}$. 
Although the total numbers of neurons are 8192, 16384, 16384, and 16384,  the numbers of active neurons are only 105,  84,  54, and 36,  respectively. 
(2) {\bf Redundancy of the layers.} We cut off the deep \ccf{neural network} at an intermediate layer and replace the subsequent layers with one simple logistic regression~\cite{hosmer2013applied}. 
As shown by the red bar in Fig.~\ref{fig:lmd_PCA_LR}~(b),  the logistic regression can achieve relatively high accuracy. This verifies that the features trained from the first few layers already have a high expression power since replacing all subsequent layers with a simple expression loses little accuracy.   
(3)~{\bf Efficiency of the active-subspace and polynomial chaos expansion.} 
We compare the proposed active-subspace layer with the principal component analysis~\cite{jolliffe2011principal} in  projecting the high-dimensional neuron to a low-dimensional space, and also compare the polynomial chaos expansion layer with logistic regression in terms of their efficiency to extract class labels from the low-dimensional variables.  Fig.~\ref{fig:lmd_PCA_LR}~(b)  shows that the combination of active-subspace and polynomial chaos expansion can achieve the best accuracy.

\begin{figure}[t]
    \centering
        \includegraphics[width=0.9\textwidth]{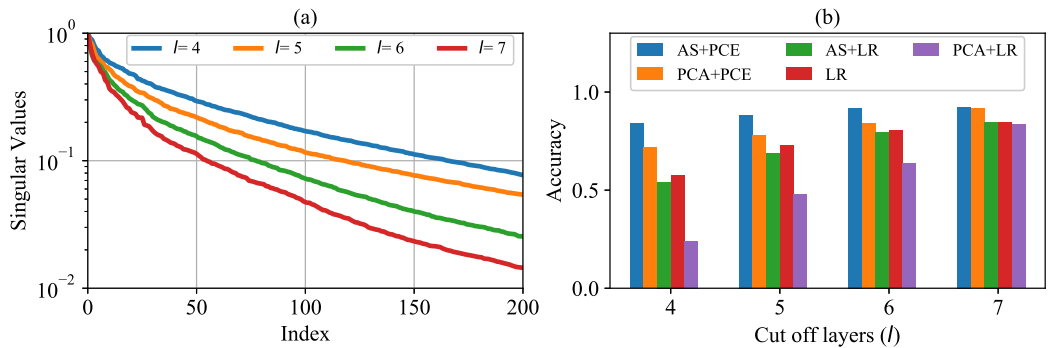}
    \caption{\small Structural analysis of VGG-19 on the CIFAR-10 dataset. (a) The first 200 singular values for layers $4\le l\le7$; (b) The accuracy (without any fine-tuning) obtained by active-subspace (AS) and polynomial chaos expansions (PCE) compared with principal component analysis (PCA) and logistic regression (LR).}
    \label{fig:lmd_PCA_LR}
\end{figure}


\begin{table}[t]
    \centering 
    \setlength{\tabcolsep}{5pt}
   \footnotesize{   
        \caption{\small Accuracy and storage on VGG-19 for CIFAR-10. 
        \ccf{Here, ``Pre-M" denotes the pre-model, i.e., layers 1 to $l$ of the original deep neural networks, ``AS" and ``PCE" denote the  active subspace and polynomial chaos expansion layer, respectively. }}
    \label{tab:cifar10_vgg}

\begin{tabular}{|c|c|ccc|ccc|}
    \hline
    Network  & Accuracy & \multicolumn{3}{c|}{Storage  (MB)}& \multicolumn{3}{c|}{Flops ($10^6$)}\\
    \hline
      VGG-19   &  93.28\% &\multicolumn{3}{c|}{76.45}& \multicolumn{3}{c|}{398.14} \\ \hline 
 && Pre-M & AS+PCE & Overall& Pre-M & AS+PCE & Overall \\  \hline
 ASNet(5) &  91.46\% & 2.12 & 3.18 & 5.30& 115.02& 0.83& 115.85 \\
 &  & & (23.41$\times$) & (14.43$\times$) & & (340.11$\times$) & (3.44$\times$) \\\hline
ASNet-s(5) &  90.40\% & 1.14 & 2.05 &  \ccf{\textbf 3.19}& 54.03& 0.54&  \ccf{\textbf 54.56} \\
 &  & (1.86$\times$)& (36.33$\times$) & \ccf{\textbf  (23.98$\times$)} & (2.13$\times$)& (527.91$\times$) &  \ccf{\textbf (7.30$\times$)} \\\hline
ASNet(6) &  93.01\% & 4.38 & 3.18 & 7.55& 152.76& 0.83& 153.60 \\
 &  & & (22.70$\times$) & (10.12$\times$) & & (294.76$\times$) & (2.59$\times$) \\\hline
ASNet-s(6) &  91.08\% & 1.96 & 1.81 & 3.77& 67.37& 0.48& 67.85 \\
 &  & (2.24$\times$)& (39.73$\times$) & (20.27$\times$) & (2.27$\times$)& (515.98$\times$) & (5.87$\times$) \\\hline
ASNet(7) &   \ccf{\textbf 93.38\%} & 6.63 & 3.18 & 9.80& 190.51& 0.83& 191.35 \\
 &  & & (21.99$\times$) & (7.80$\times$) & & (249.41$\times$) & (2.08$\times$) \\\hline
ASNet-s(7) &  90.87\% & 2.61 & 1.91 & 4.52& 80.23& 0.50& 80.73 \\
 &  & (2.54$\times$)& (36.64$\times$) & (16.92$\times$) & (2.37$\times$)& (415.68$\times$) & (4.93$\times$) \\\hline
    \end{tabular}
}
\end{table}
\begin{table}[t]
\setlength{\tabcolsep}{5pt}
    \centering
\footnotesize{\caption{Accuracy and storage on ResNet-110 for CIFAR-10. 
\ccf{Here, ``Pre-M" denotes the pre-model, i.e., layers 1 to $l$ of the original deep neural networks, ``AS" and ``PCE" denote the  active subspace and polynomial chaos expansion layer, respectively. }}
  
     \label{tab:cifar10_resnet}    
  
    \begin{tabular}{|c|c|ccc|ccc|}
    \hline
        Network  & Accuracy & \multicolumn{3}{c|}{Storage  (MB)}& \multicolumn{3}{c|}{Flops ($10^6$)}\\
    \hline
      ResNet-110   &  93.78\% &\multicolumn{3}{c|}{6.59}& \multicolumn{3}{c|}{252.89} \\ \hline 
       && Pre-M & AS+PCE & Overall&Pre-M & AS+PCE & Overall \\  \hline 
      ASNet(61) &  89.56\% & 1.15 & 1.61 & 2.77& 140.82& 0.42& 141.24 \\
 &  & & (3.37$\times$) & (2.38$\times$) & & (265.03$\times$) & (1.79$\times$) \\\hline
ASNet-s(61) &  89.26\% & 0.83 & 1.23 &  \ccf{\textbf 2.06}& 104.05& 0.32&  \ccf{\textbf 104.37} \\
 &  & (1.39$\times$)& (4.41$\times$) &  \ccf{\textbf (3.19$\times$)} & (1.35$\times$)& (346.82$\times$) &  \ccf{\textbf (2.42$\times$)} \\\hline
ASNet(67) &  90.16\% & 1.37 & 1.61 & 2.98& 154.98& 0.42& 155.40 \\
 &  & & (3.24$\times$) & (2.21$\times$) & & (231.55$\times$) & (1.63$\times$) \\\hline
ASNet-s(67) &  89.69\% & 1.00 & 1.22 & 2.22& 116.38& 0.32& 116.70 \\
 &  & (1.36$\times$)& (4.29$\times$) & (2.97$\times$) & (1.33$\times$)& (306.72$\times$) & (2.17$\times$) \\\hline
ASNet(73) &   \ccf{\textbf 90.48\%} & 1.58 & 1.61 & 3.19& 169.13& 0.42& 169.55 \\
 &  & & (3.11$\times$) & (2.06$\times$) & & (198.07$\times$) & (1.49$\times$) \\\hline
ASNet-s(73) &  90.02\% & 1.18 & 1.16 & 2.34& 128.65& 0.30& 128.96 \\
 &  & (1.34$\times$)& (4.32$\times$) & (2.82$\times$) & (1.31$\times$)& (275.74$\times$) & (1.96$\times$) \\\hline
    \end{tabular}
}
   
\end{table}

\subsubsection{CIFAR-10}
We continue to present the   results of ASNet and  ASNet-s on CIFAR-10 by  two widely used networks:  VGG-19 and ResNet-110 in   
 Tables~\ref{tab:cifar10_vgg}  and \ref{tab:cifar10_resnet}, respectively. 
 The second column shows the testing accuracy for the corresponding network. We report the storage and computational costs for the pre-model, post-model (i.e., active-subspace plus polynomial chaos expansion for ASNet and ASNet-s), and overall results,  respectively. 
 For both examples, ASNet and ASNet-s can achieve  a similar  accuracy with the teacher  network yet with  much smaller storage and computational cost.  
 For VGG-19, ASNet achieves  $14.43\times$ storage savings and $3.44\times$ computational reduction; ASNet-s achieves $23.98\times$ storage savings and $7.30\times$ computational reduction.  For most ASNet and ASNet-s networks, the storage and computational costs of the post-models achieve significant performance boosts  by our proposed network structure changes.  
It is not  surprising to see that    increasing the layer index (i.e., cutting off the    deep neural network at a later layer) can produce a  higher accuracy. 
\ccf{However, increasing the layer index also results in a smaller compression ratio. 
In other words, the choice of layer index is a trade-off between the accuracy reduction with the compression ratio. 
}
 
\ccf{For Resnet-110, our results are not as good as those on VGG-19. We find that the eigenvalues  for its covariance matrix are not exponentially decreasing as that of VGG-19, which results in a large number of active neurons or a large error $\epsilon$ when fixing $r=50$. A possible reason is that   ResNet updates as $\mat x_{l+1}=\mat x_l+f_l(\mat x_l)$. 
Hence, the partial gradient $\partial \mat x_{l+1}/\partial \mat x_l = I + \nabla f_l(\mat x_l)$ is less likely to be low-rank.}


\begin{table}[t]
    \centering
    \setlength{\tabcolsep}{5pt}
\footnotesize{        \caption{\small Accuracy and storage on VGG-19 for CIFAR-100. 
\ccf{Here, ``Pre-M" denotes the pre-model, i.e., layers 1 to $l$ of the original deep neural networks, ``AS" and ``PCE" denote the  active subspace and polynomial chaos expansion layer, respectively. }}
    \label{tab:cifar100_vgg}
    \begin{tabular}{|c|cc|ccc|ccc|}
    \hline
    Network  & Top-1 & Top-5 & \multicolumn{3}{c|}{Storage  (MB)}& \multicolumn{3}{c|}{Flops ($10^6$)}\\
    \hline
      VGG-19   & 71.90\%& 89.57\%  &\multicolumn{3}{c|}{76.62}& \multicolumn{3}{c|}{398.18} \\ \hline 
 &&& Pre-M & AS+PCE & Overall&Pre-M & AS+PCE & Overall \\  \hline
ASNet(7) &  70.77\% &  91.05\% &6.63 & 3.63 & 10.26& 190.51& 0.83& 191.35 \\
 &  & & & (19.23$\times$) & (7.45$\times$) & & (249.41$\times$) & (2.08$\times$) \\\hline
ASNet-s(7)   &70.20\% & 90.90\% & 5.20 & 3.24& 8.44& 144.81& 0.85&  \ccf{\textbf 145.66} \\
 &  & & (1.27$\times$) & (21.56$\times$) & (9.06$\times$) & (1.32$\times$)& (244.57$\times$) &  \ccf{\textbf (2.73$\times$)} \\\hline
ASNet(8) &  69.50\% &  90.15\% &8.88 & 1.29 & 10.17& 228.26& 0.22& 228.48 \\
 &  & & & (52.50$\times$) & (7.52$\times$) & & (779.04$\times$) & (1.74$\times$) \\\hline
ASNet-s(8)   &69.17\% & 89.73\% & 6.87 & 1.22&  \ccf{\textbf 8.09}& 172.69& 0.32& 173.01 \\
 &  & & (1.29$\times$) & (55.36$\times$) &  \ccf{\textbf (9.45$\times$)} & (1.32$\times$)& (530.92$\times$) & (2.30$\times$) \\\hline
ASNet(9) &   \ccf{\textbf 72.00\%} &   \ccf{\textbf 90.61\%} &13.39 & 2.07 & 15.46& 247.14& 0.42& 247.56 \\
 &  & & & (30.49$\times$) & (4.95$\times$) & & (357.10$\times$) & (1.61$\times$) \\\hline
ASNet-s(9)   &71.38\% & 90.28\% & 9.38 & 1.94& 11.32& 183.27& 0.51& 183.78 \\
 &  & & (1.43$\times$) & (32.49$\times$) & (6.75$\times$) & (1.35$\times$)& (296.74$\times$) & (2.17$\times$) \\\hline
    \end{tabular}
}
\end{table}

\begin{table}[t]
    \centering
    \setlength{\tabcolsep}{5pt}
 \footnotesize{  \caption{Accuracy and storage on ResNet-110 for CIFAR-100. 
 \ccf{Here, ``Pre-M" denotes the pre-model, i.e., layers 1 to $l$ of the original deep neural networks, ``AS" and ``PCE" denote the  active subspace and polynomial chaos expansion layer, respectively. }}
    \label{tab:cifar100_resnet}

 \begin{tabular}{|c|c c|ccc|ccc|}
    \hline
    Network  & Top-1 & Top-5 & \multicolumn{3}{c|}{Storage  (MB)}& \multicolumn{3}{c|}{Flops ($10^6$)}\\
    \hline
   
    ResNet-110 & 71.94\% & 91.71 \% &\multicolumn{3}{c|}{6.61}& \multicolumn{3}{c|}{252.89} \\ \hline 
     &&& Pre-M & AS+PCE & Overall&Pre-M & AS+PCE & Overall \\  \hline  
ASNet(75) &  63.01\% &  88.55\% &1.79 & 1.29 & 3.08& 172.67& 0.22& 172.89 \\
 & & & & (3.73$\times$) & (2.14$\times$) & & (367.88$\times$) & (1.46$\times$) \\\hline
ASNet-s(75)   &63.16\% & 88.65\% & 1.47 & 1.20&  \ccf{\textbf 2.67}& 143.11& 0.31&  \ccf{\textbf 143.42} \\
 &  & & (1.22$\times$) & (3.99$\times$) &  \ccf{\textbf (2.46$\times$)} & (1.21$\times$)& (254.69$\times$) &  \ccf{\textbf (1.76$\times$)} \\\hline
ASNet(81) &  65.82\% &  90.02\% &2.64 & 1.29 & 3.93& 186.83& 0.22& 187.04 \\
 & & & & (3.07$\times$) & (1.68$\times$) & & (302.96$\times$) & (1.35$\times$) \\\hline
ASNet-s(81)   &65.73\% & 89.95\% & 2.20 & 1.21& 3.41& 155.61& 0.32& 155.93 \\
 &  & & (1.20$\times$) & (3.27$\times$) & (1.93$\times$) & (1.20$\times$)& (208.38$\times$) & (1.62$\times$) \\\hline
ASNet(87) &   \ccf{\textbf 67.71\% }&  \ccf{\textbf 90.17\%} &3.48 & 1.29 & 4.77& 200.98& 0.22& 201.20 \\
 & & & & (2.41$\times$) & (1.38$\times$) & & (238.04$\times$) & (1.26$\times$) \\\hline
ASNet-s(87)   &67.65\% & 90.10\% & 2.91 & 1.21& 4.12& 166.50& 0.32& 166.81 \\
 &  & & (1.20$\times$) & (2.56$\times$) & (1.60$\times$) & (1.21$\times$)& (163.50$\times$) & (1.52$\times$) \\\hline
    \end{tabular} 
    }
\end{table}

\subsubsection{CIFAR-100}
Next, we present the results of VGG-19 and ResNet-110 on  CIFAR-100 in  Tables~\ref{tab:cifar100_vgg} and \ref{tab:cifar100_resnet}, respectively. On VGG-19, ASNet can achieve $7.45\times$ storage savings and $2.08\times$  computational reduction, and ASNet-s can achieve $9.06\times$ storage savings and $2.73\times$ computational reduction.  The accuracy loss  is negligible for VGG-19 but larger for ResNet-110. The performance boost of ASNet is obtained by just changing the network structures and without any model compression (e.g., pruning, quantization, or low-rank factorization). 


\subsection{Universal Adversarial Attacks}
This subsection demonstrates the effectiveness of active-subspace in identifying a universal adversarial attack vector. 
We denote the result  generated by  Algorithm~\ref{alg:attack_UAS}  as ``AS'' and compare it with the ``UAP'' method in \cite{moosavi2017universal} and  with ``random" Gaussian distribution vector. 
The parameters in Algorithm~\ref{alg:attack_UAS} are set as $\alpha=10$ and $\delta=5,\ldots,10$.  
The default parameters of UAP are applied except for  the maximal iteration. In the  implementation of~\cite{moosavi2017universal}, the maximal iteration is set as infinity, which is  time-consuming when the training dataset or the number of classes is large. In our experiments, we set the maximal iteration  as 10. 
In all figures and tables,  we report the average attack ratio and CPU time in training out of ten repeated experiments  with different training datasets.  
A higher attack ratio  means the corresponding algorithm is better in fooling the given deep neural network. 
The datasets are chosen in two ways. 
We firstly test  data points from one class (e.g., trousers in Fashion-MNIST) because these data points share lots of common features and have a higher probability to be attacked by a universal perturbation vector.  
We then conduct experiments on the whole dataset to show our proposed algorithm can also provide \ccf{better performance compared with the baseline} even if the dataset has diverse features.

\begin{figure}[ht]
    \centering
    \includegraphics[width=\textwidth]{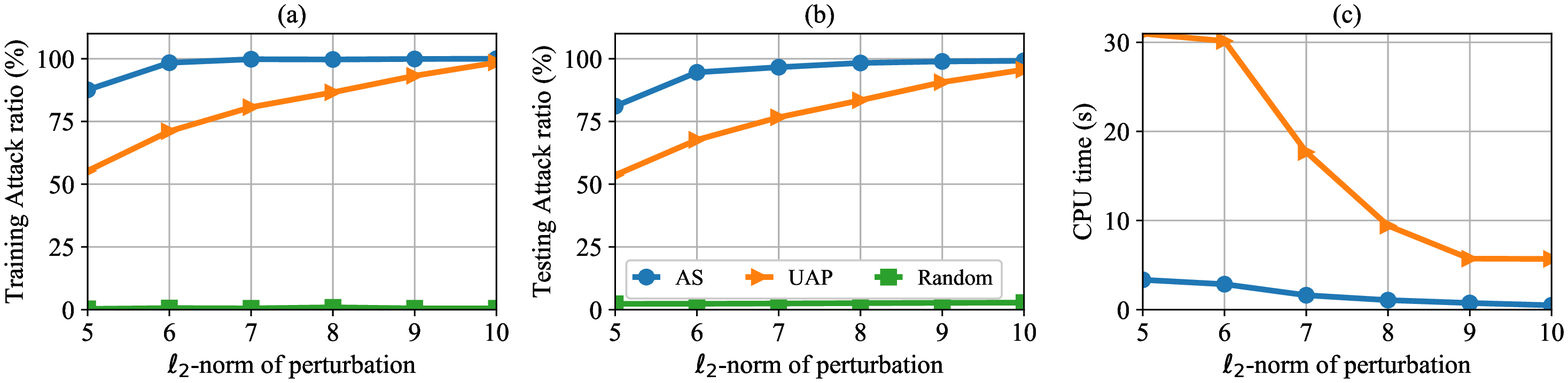}
     \includegraphics[width=\textwidth]{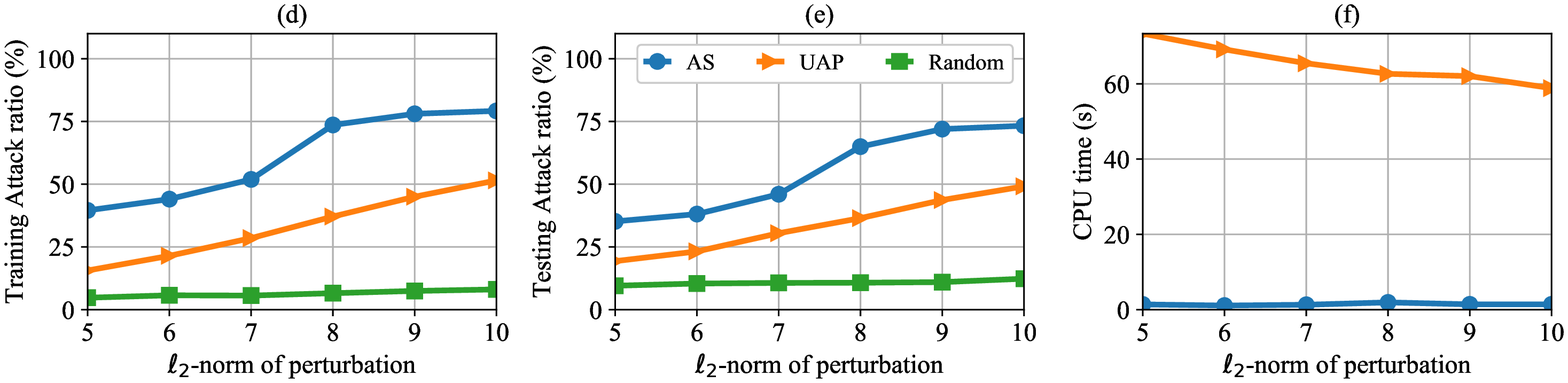}
    \caption{\small Universal adversarial attacks for the  Fashion-MINST with respect to different $\ell_2$-norms. (a)-(c): the results for attacking one class dataset. (d)-(f): the results for attacking the whole dataset. } 
    \label{fig:mnist_attack}
\end{figure}

\subsubsection{Fashion-MNIST}
Firstly, we present the adversarial attack    result on  Fashion-MNIST by a 4-layer neural network. There are two    convolutional layers with kernel size equals  5$\times$5. The size of output channels for each convolutional layer is 20 and 50, respectively. Each convolutional layer is followed by a ReLU activation layer and a max-pooling layer with a kernel size of $2\times 2$.   There are two fully connected layers. The first fully connected layer has an input feature 800 and an output feature 500. 
 
 Fig.~\ref{fig:mnist_attack} presents the attack ratio  of our active-subspace method compared with the baselines UAP method~\cite{moosavi2017universal} and Gaussian random vectors.  The top figures show the results for just one class (i.e., trouser),  and the bottom figures show the results for all ten classes. 
For all perturbation norms, the active-subspace method can achieve around 30\% higher attack ratio  than UAP  while more than 10 times faster. 
This verifies   that the  active-subspace method has better universal representation ability  compared with UAP because the active-subspace can find a universal direction while  UAP   solves   data-dependent subproblems independently.  
By the active-subspace approach,  \ccf{the attack ratio for the first class and the whole dataset are} around 100\% and 75\%, respectively. 
This coincides with our intuition that the  data points in one class have higher similarity than data points from different classes.

In Fig.~\ref{fig:mnist_attack_ex}, we plot one image  from  Fashion-MNIST  and its perturbation  by the active-subspace attack vector. The attacked image in Fig.~\ref{fig:mnist_attack_ex}~(c) still looks like a trouser for a human. However, the  deep neural network misclassifies it as a t-shirt/top.

 
\begin{figure}[t]
    \centering
    \includegraphics[width=\textwidth]{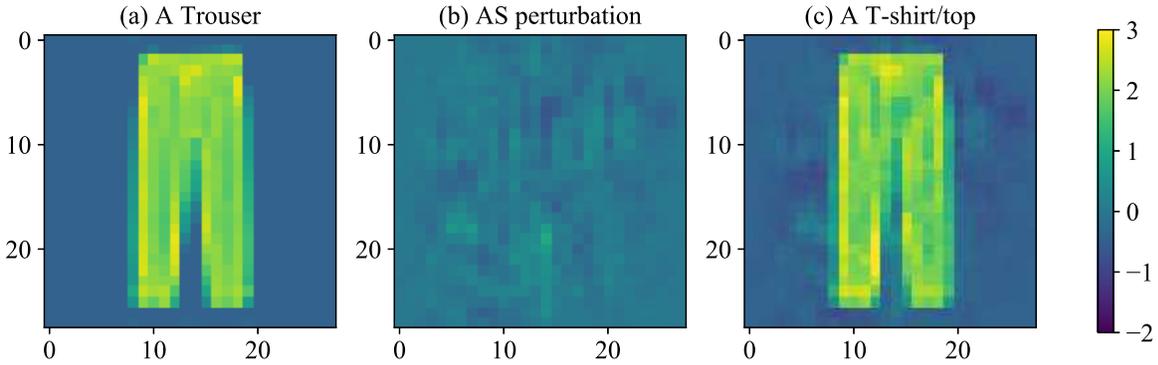}
    \caption{\small The effect of our attack method on one data sample in the Fashion-MNIST dataset. (a) A trouser from the original dataset. (b) An active-subspace perturbation vector with the $\ell_2$ norm equals to 5. (c) The perturbed sample is misclassified as a t-shirt/top by the deep neural network.}
    \label{fig:mnist_attack_ex}
\end{figure}

\begin{figure}[t]
    \centering
    \includegraphics[width=\textwidth]{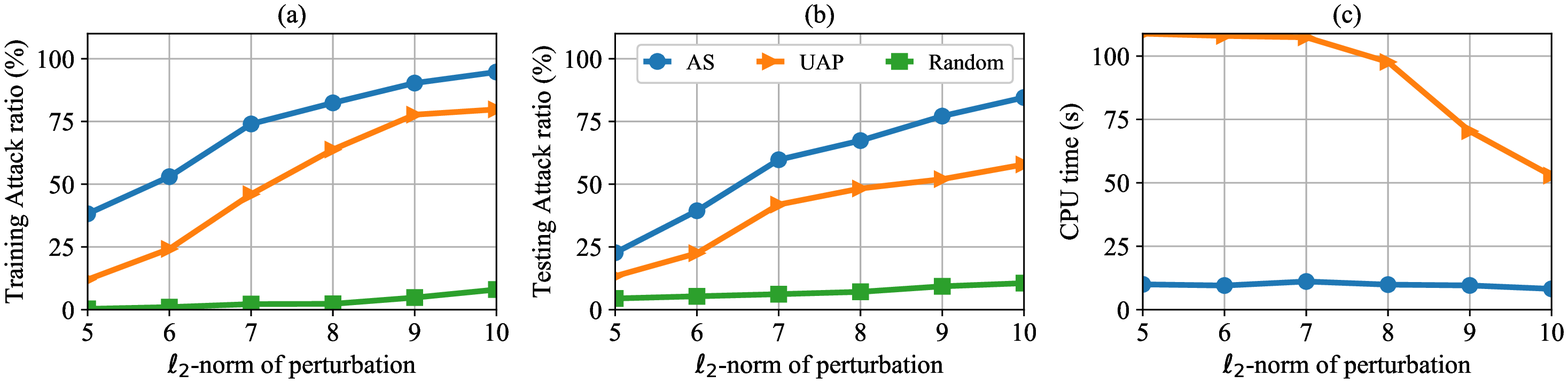}
    \includegraphics[width=\textwidth]{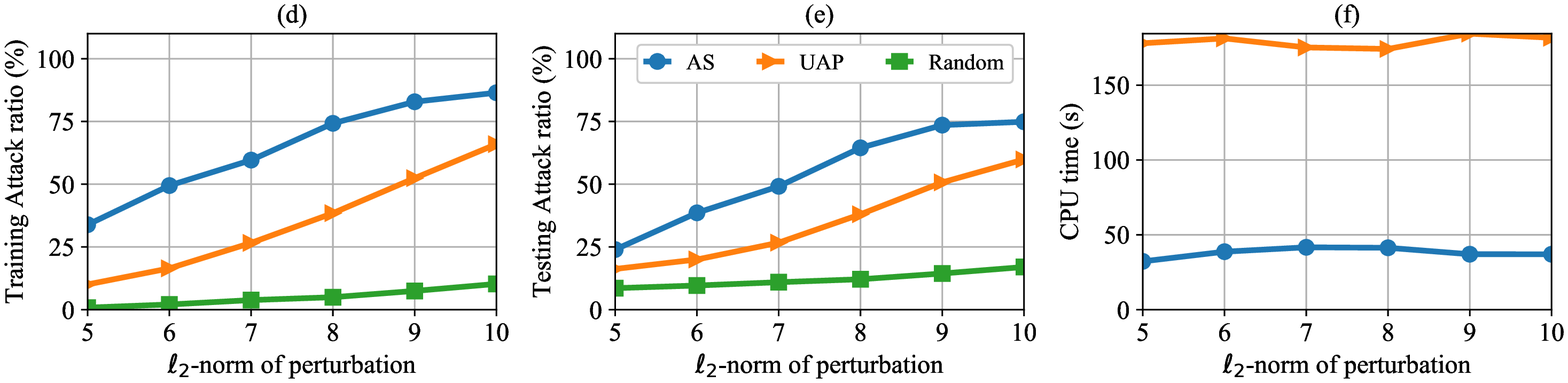}
    \caption{\small Universal adversarial attacks of VGG-19 on CIFAR-10 with respect to different $\ell_2$-norm perturbations. (a)-(c): The training attack ratio, the testing attack ratio, and the  CPU time in seconds for attacking one class dataset. (d)-(f): The results for attacking ten classes dataset together.}
    \label{fig:cifar_attack}
\end{figure}

\subsubsection{CIFAR-10}
\label{sub:adversal_CIFAR10}
Next, we show the numerical results of attacking VGG-19 on CIFAR-10. Fig.~\ref{fig:cifar_attack} compares the active-subspace method compared with the baseline UAP and Gaussian random vectors. 
The top figures show the results by the dataset in the first class  (i.e., automobile), and the bottom figures show the results for all ten classes. 
For both two cases, the proposed active-subspace attack can achieve 20\% higher attack ratios while  three times faster than UAP. 
This is similar to the results in Fashion-MNIST because the active-subspace has a better ability to capture the global information.

\begin{figure}[t]
    \centering
    \includegraphics[width=\textwidth]{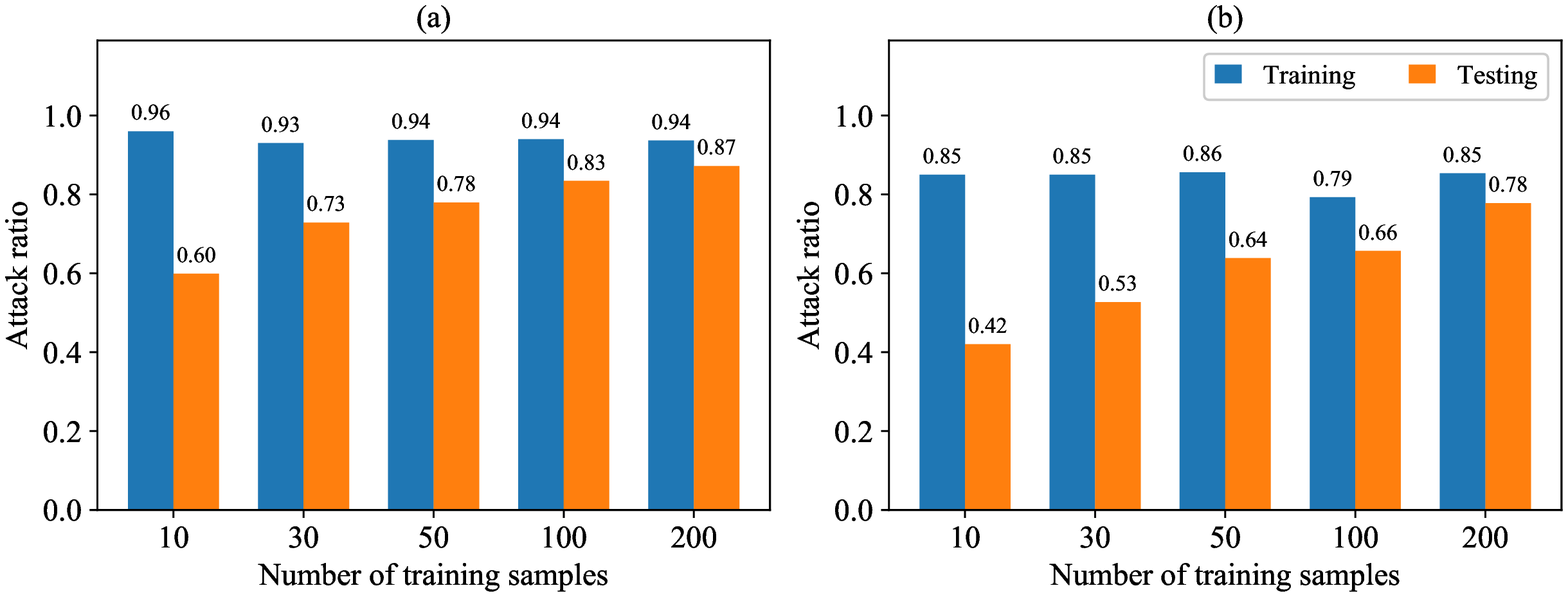}
    \caption{\small Adversarial attack of VGG-19 on   CIFAR-10    with different number of training samples. The $\ell_2$-norm perturbation is fixed as 10.  (a) The results of attacking the dataset from the first class; (b) The results of attacking the whole dataset with 10 classes.}
    \label{fig:cifar_attack_differnt_samples}
\end{figure}

We further show the effects of {\sl different number of training samples} in  Fig.~\ref{fig:cifar_attack_differnt_samples}. 
When  the number of samples is increased, the testing attack ratio is  getting better. 
In our numerical experiments, we set the number of samples as 100 for  one-class experiments and 200 for all-classes experiments.

We  continue to show the {\sl cross-model} performance on four different ResNet networks and one VGG network. 
We test the performance of the attack vector trained from one model on all other models. 
Each row in Table~\ref{tab:cross_model_cifar10} shows the results on the same deep neural network and each column shows the results of the same attack vector. 
It shows that  ResNet-20 is easier to  be attacked compared with  other models. This agrees with our intuition that a simple network structure such as ResNet-20 is less robust.  
On the contrary, VGG-19 is the most robust.   
The success of cross-model attacks   indicates  that these neural networks could find a similar feature.  

\begin{table}[t]
    \centering
    
    \caption{\small Cross-model performance for CIFAR-10}
    \label{tab:cross_model_cifar10}
    
    \begin{tabular}{|c|c|c|c|c|c|}
    \hline
&ResNet-20 & ResNet-44 & ResNet-56 & ResNet-110 & VGG-19 \\
\hline
ResNet-20 &\textbf{91.35\%} & 87.74\% & 86.28\% & 87.38\% & 81.16\% \\
\hline
ResNet-44 &84.75\% & \textbf{92.28\%} & 87.03\% & 85.44\% & 83.44\% \\
\hline
ResNet-56 &83.63\% & 86.67\% & \textbf{90.15\%} & 87.39\% & 84.38\% \\
\hline
ResNet-110 &71.02\% & 77.58\% & 74.19\% & \textbf{92.77\%} & 77.32\% \\
\hline
VGG-19 &53.61\% & 59.74\% & 61.49\% & 66.29\% & \textbf{80.02\%} \\
\hline
    \end{tabular}
    
\end{table}

\subsubsection{CIFAR-100}
Finally, we show the results on CIFAR-100 for both the first class (i.e., dolphin) and all classes.  
Similar to Fashion-MNIST and CIFAR-10,  Fig.~\ref{fig:cifar100_attack} shows that active-subspace can achieve   higher attack ratios than both UAP and  Gaussian random vectors. 
Further, compared with  CIFAR-10,  CIFAR-100 is easier to be attacked  partially because it has more classes.

\begin{figure}[t]
    \centering
    \includegraphics[width=\textwidth]{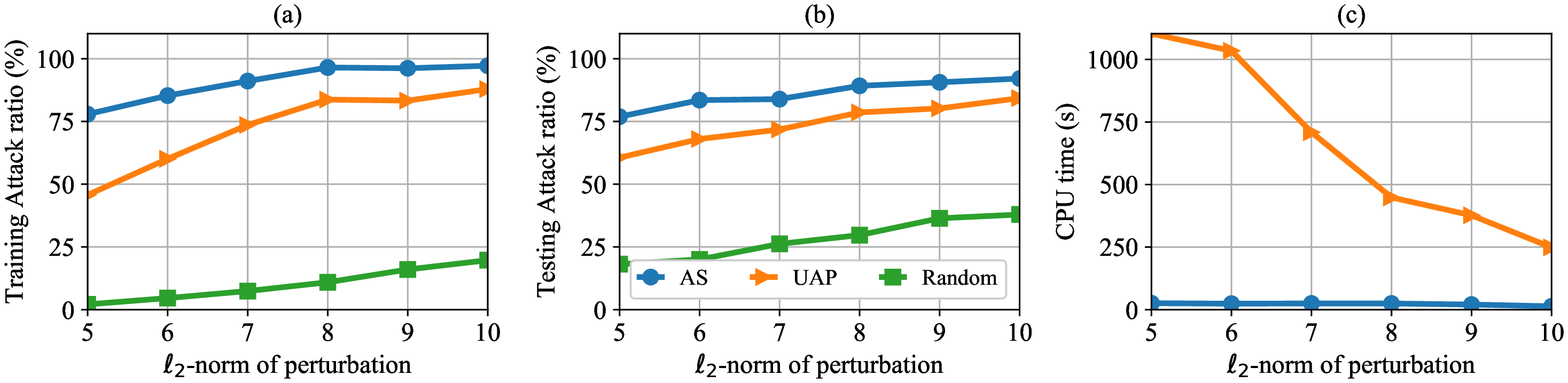}
    \includegraphics[width=\textwidth]{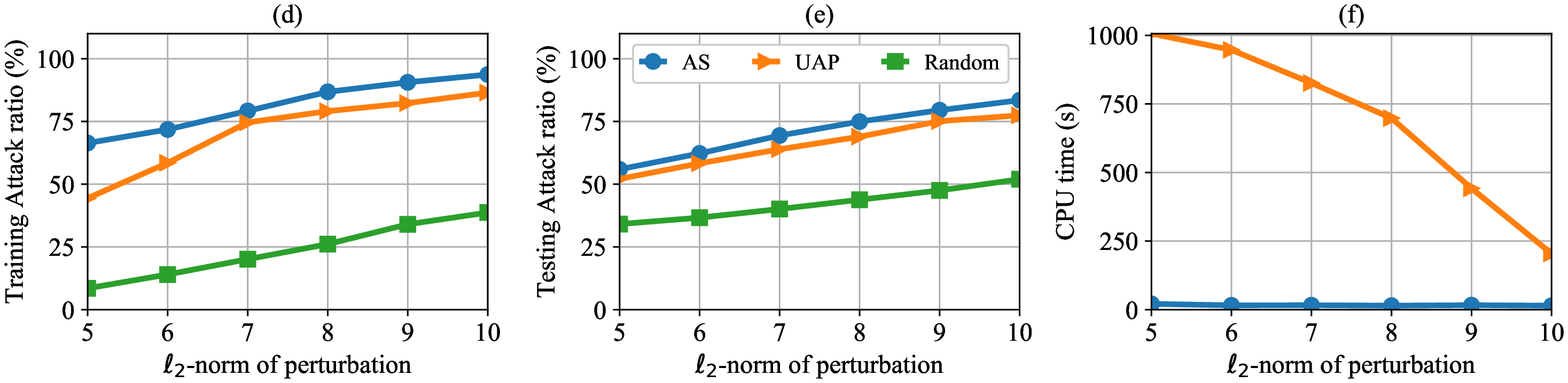}
    \caption{\small Results for universal adversarial attack for  CIFAR-100 with respect to different $\ell_2$-norm perturbations. (a)-(c): The results for  attacking the dataset from the first class. (d)-(f): The results for attacking ten classes dataset together. }
    \label{fig:cifar100_attack}
\end{figure}

We summarize the results for different datasets  in Table~\ref{tab:summary}. 
The second column shows the number of classes in the dataset. 
In terms of testing attack ratio for the whole dataset, active-subspace achieves $24.2\%$, $15\%$, and $6.1\%$ higher attack ratios than UAP for Fashion-MNIST, CIFAR-10, and CIFAR-100,  respectively. 
In terms of the CPU time,  active-subspace achieves $42\times$, $5\times$, and $14\times$ speedup than UAP on the  Fashion-MNIST, CIFAR-10, and CIFAR-100, respectively.

 \begin{table}[t]
     \centering
  \caption{\small Summary of the universal attack for different datasets by the active-subspace compared with UAP and the random vector. The norm of perturbation is equal to   10.}
\label{tab:summary}
     \begin{tabular}{|c|c|ccc|ccc|cc|}
 \hline
 &&\multicolumn{3}{c|}{Training Attack ratio}&\multicolumn{3}{c|}{Testing Attack ratio}&\multicolumn{2}{c|}{CPU time (s)}\\
 \hline
&\# Class  & AS & UAP & Rand & AS & UAP & Rand & AS & UAP\\\hline
Fashion- & 1 &100.0\% & 93.6\% & 1.8\%& \ccf{\textbf 98.0\%} & 91.3\% & 3.0\% &  \ccf{\textbf 0.15} & 5.49 \\
MNIST & 10 &79.2\% & 51.5\% & 8.0\%&  \ccf{\textbf 73.3\%} & 49.1\% & 12.3\% &  \ccf{\textbf 1.40} & 58.85 \\
\hline
\multirow{2}{*}{\small CIFAR-10} 
 & 1 &94.7\% & 79.8\% & 8.0\%& \ccf{\textbf  84.5\%} & 57.9\% & 10.6\% &  \ccf{\textbf 8.18} & 52.83 \\
& 10 &86.5\% & 65.9\% & 10.2\%& \ccf{\textbf 74.9\%} & 59.9\% & 17.0\% &  \ccf{\textbf 37.01} & 181.72 \\
\hline
\multirow{2}{*}{\small CIFAR-100} 
& 1 &97.2\% & 87.9\% & 19.7\%&  \ccf{\textbf 92.1\%} & 84.3\% & 37.9\% &  \ccf{\textbf 13.32} & 248.78 \\
& 100&93.7\% & 86.5\% & 38.7\%&  \ccf{\textbf 83.5\%} & 77.4\% & 52.0\% & \ccf{\textbf 14.32} & 204.50 \\
\hline
     \end{tabular}
 \end{table}

\section{Conclusions \ccf{and Discussions}}
\label{sec:conclusion}

 This paper has   analyzed deep neural networks by the active subspace method originally developed for dimensionality reduction of uncertainty quantification. We have investigated two problems: how many neurons and layers are necessary (or important) in a deep neural network, and how to generate a universal adversarial attack vector that can be applied to a set of testing data?  
 Firstly, we have presented a definition of ``the number of active neurons'' and  have shown its theoretical error bounds for model reduction. Our numerical study has shown that many neurons and layers are not needed. Based on this observation, we have proposed a new network called ASNet by cutting off the whole neural network at a proper layer and replacing all subsequent layers with an active subspace layer and a polynomial chaos expansion layer.
The numerical experiments show that the proposed deep neural network structural analysis method  can produce  a new network with significant storage savings and computational speedup yet with little accuracy loss. 
 Our methods can be combined with existing model compression techniques (e.g., pruning, quantization and low-rank factorization) to develop compact  deep neural network models that are more suitable for the deployment on resource-constrained platforms. Secondly, we have applied the active subspace to generate a universal attack vector that is  independent of a specific data sample and can be applied to a whole dataset. Our proposed method can achieve a much higher attack ratio than the existing work~\cite{moosavi2017universal} and enjoys a lower computational cost. 
 
 \ccf{ASNet has two main goals:   to detect the necessary neurons and layers, and to compress the existing network. 
 To fulfill the first goal, we require a pre-trained model because from Lemmas~\ref{lem: ASerror},  and \ref{lem:err_PCE}, the  accuracy of the reduced model will approach that of the original one.  
 For the second task,  the pre-trained model helps us to get a good estimation for the number of active neurons, a proper layer to cut off, and a good initialization for the active subspace layer and polynomial chaos expansion layer. 
However,  a pre-trained model is not required because we can construct ASNet in a heuristic way (as done in most DNN): 
a reasonable guess for the number of active neurons and cut-off layer, and a random parameter initialization for the pre-model, the active subspace layer and the polynomial chaos expansion layer.} 
 
\ccf{\section*{Acknowledgement}
 We thank the associate editor and referees for their valuable comments and suggestions. 
 } 
 


{\small
\bibliographystyle{siamplain}
\bibliography{ref}
}

\end{document}